\icmltitlerunning{Stochastic Aggregation in Graph Neural Networks}
\begin{document}

\twocolumn[
\icmltitle{Stochastic Aggregation in Graph Neural Networks}




\begin{icmlauthorlist}
\icmlauthor{Yuanqing Wang}{mskcc,cornell,cuny}
\icmlauthor{Theofanis Karaletsos}{facebook}
\end{icmlauthorlist}

\icmlaffiliation{mskcc}{Computational and Systems Biology Program, Sloan Kettering Institute, Memorial Sloan Kettering Cancer Center, New York, NY 10065}

\icmlaffiliation{cornell}{Physiology, Biophysics, and System Biology Ph.D. Program, New York, NY 10065}

\icmlaffiliation{cuny}{M.F.A. Program in Creative Writing, City College of New York, City University of New York, New York, NY 10031}

\icmlaffiliation{facebook}{Facebook, Inc., Menlo Park, CA 94025}

\icmlcorrespondingauthor{Yuanqing Wang}{yuanqing.wang@choderalab.org}

\icmlcorrespondingauthor{Theofanis Karaletsos}{theokara@fb.com}


\icmlkeywords{Graph Neural Network, Variational Inference}

\vskip 0.3in
]
\printAffiliationsAndNotice{}



\begin{abstract}
Graph neural networks (GNNs) manifest pathologies including over-smoothing and limited discriminating power as a result of suboptimally expressive aggregating mechanisms.
We herein present a unifying framework for \textit{stochastic aggregation} (STAG) in GNNs, where noise is (adaptively) injected into the aggregation process from the neighborhood to form node embeddings. 
We provide theoretical arguments that STAG models, with little overhead, remedy both of the aforementioned problems. 
In addition to fixed-noise models, we also propose probabilistic versions of STAG models and a variational inference framework to learn the noise posterior. 
We conduct illustrative experiments clearly targeting oversmoothing and multiset aggregation limitations. 
Furthermore, STAG enhances general performance of GNNs demonstrated by competitive performance in common citation and molecule graph benchmark datasets.

\end{abstract}

\section{Introduction: Aggregation in Graph Neural Networks and its Limitations}
Graph neural networks (GNNs)---neural models that operate on graphs and form node embeddings from its topological neighborhoods---have shown promises in a wide range of domains including social and physical modeling.~\cite{DBLP:journals/corr/KipfW16, xu2018powerful, gilmer2017neural, hamilton2017inductive, battaglia2018relational}
GNNs follow an iterative scheme where the representations of adjacent nodes are pooled with an aggregation function and transformed by a feed-forward neural network.
Working analogously to a Weisfeiler-Lehman (WL) graph isomorphism test~\cite{weisfeiler1968reduction} on a node level~\cite{xu2018powerful} and resembling a series of Laplacian smoothing on a graph level~\cite{DBLP:journals/corr/KipfW16}, such scheme affords GNNs with the ability to generate node embeddings that are rich up to the local symmetry and clustered based on neighborhoods.

Nonetheless, such aggregation scheme also causes limitations of GNNs. 
Firstly, without proper choices of aggregation functions, GNNs are not always as powerful as WL test.
When pooling from (transformed) neighborhood representations, if the underlying set for the neighborhood multiset (See Definition 1 of \citet{xu2018powerful}) is countable, as has been studied in detail in \citet{xu2018powerful}, although different multiset functions learn different attributes of the neighborhood---$\operatorname{MAX}$ learns distinct elements and $\operatorname{MEAN}$ learns distributions---only $\operatorname{SUM}$ is injective and thus capable of achieving the expressive power of WL test.
When the features are continuous, however, \citet{corso2020principal} states that multiple aggregators are needed.
Secondly, the number of layers, which corresponds to the number of steps in a WL test, controls the \textit{locality} of a GNN model, and therefore only deep GNNs can learn long-range relationships; 
unfortunately, deep GNNs suffer from not only \textit{over-fitting} but also \textit{over-smoothing}, where node representation converge to a stationary point dependent only on its degree but not the initial features.~\cite{DBLP:journals/corr/abs-1801-07606, DBLP:journals/corr/abs-1905-10947}

To alleviate these issues, we replace the deterministic aggregation function with stochastic ones and propose a framework which we call STAG, short for \textit{stochastic aggregation}.
At each round of message-passing, we inject randomness into the system by perturbing the weights of the edges according to some distribution, thereby stochastically reweighing the incoming messages.
When training, the gradient of the loss function w.r.t. the parameters could be estimated without bias using Monte Carlo (MC) estimation;
at inference pass, the predictive posterior distribution is formed by marginalize over the edge weight distribution.
The parameters of the distribution of edge weights could either be treated as hyperparameters or jointly trained in an adaptive way under a variational inference (VI) framework.

We summarize our contributions in this paper as follow:
\begin{itemize}
    \item We propose a stochastic aggregation (STAG) framework for GNNs which generalizes Dropout~\cite{JMLR:v15:srivastava14a, gal2016dropout}, DropEdge~\cite{DBLP:journals/corr/abs-1907-10903}, and Graph DropConnect~\cite{DBLP:journals/corr/abs-1907-10903} and expand it to include the perturbation of edge weights using continuous noise distribution, which empirically displays better performance.
    \item We theoretically prove and experimentally demonstrate that STAG, when used with many classes of noise distributions, alleviates the over-smoothing issue and increase expressiveness.
    \item We propose a variational inference (VI) scheme where the parameters of the distributions of edge weights could be learned. Furthermore, such parameters could depend on the graph structure and node embeddings, thus generalizing across graphs.
\end{itemize}

\begin{figure}
    \centering
    \includegraphics[width=0.35\textwidth]{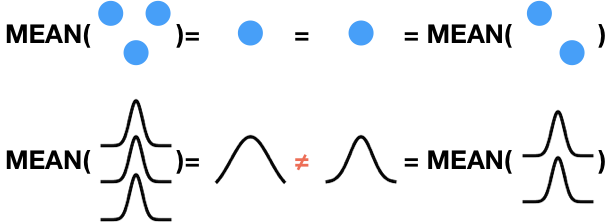}
    \caption{Aggregators display degeneracy issues under the algebra of vectors but not that of random variables.}
    \label{sym}
\end{figure}

\section{Preliminaries}
\subsection{Graph}
A graph is defined as a tuple of collections of nodes and edges $\mathcal{G} = \{\mathcal{V}, \mathcal{E}\}$.
In this paper, we only consider cases where only nodes, but not edges, are attributed; 
node features $[\mathbf{x}_1, \mathbf{x}_2, ..., \mathbf{x}_N] = \mathbf{X} \in \mathbb{R}^{N \times C}$ where $N = \mid \mathcal{V} \mid $ is the number of nodes and $C$ the feature dimension.
Adjacency matrix $\mathbf{A} \in \mathbb{R}^{N \times N}$ associates edges with nodes:
\begin{equation}
    \mathbf{A}_{ij} = \begin{cases}
    1, \: (v_i, v_j) \in \mathcal{E};\\
    0, \: (v_i, v_j) \notin \mathcal{E}.
\end{cases}
\end{equation}

\subsection{Graph Neural Networks}
Modern GNNs could usually be better analyzed through the \textit{spatial} rather than \textit{spectral} lens, according to \citet{DBLP:journals/corr/abs-1901-00596}'s classification.
Following the framework from \citet{xu2018powerful} and \citet{battaglia2018relational}, the $k$-th layer of a GNN could be written as two steps---\textit{neighborhood aggregation}:
\begin{equation}
\label{agg}
a_v^{(k)} = \rho^{(k)}
\big(
h_u^{(k-1)}, u \in \mathcal{N}(v)
\big),
\end{equation}
and \textit{node update}:
\begin{equation}
h_v^{(k)} = \phi^{(k)}(h_v^{(k-1)}, a_v^{(k)}),
\end{equation}
where $h_v^{k}$ is the feature of node $v$ at $k$-th layer, $h_v^{0} = \mathbf{x}_v$ and $\mathcal{N}(\cdot)$ denotes the operation to return the multiset of neighbors of a node. 

Many classical GNNs could be represented in this framework with different choices of \textit{aggregation function} $\rho$ and \textit{update function} $\phi$. 
For instance, Graph Convolutional Network (GCN) by \citet{DBLP:journals/corr/KipfW16}, the graph-level message-passing rule is:
\begin{equation}
\label{kipf-global}
\mathbf{H}^{(k)} = \sigma
\big(
\widetilde{\mathbf{D}}^{-\frac{1}{2}}
\widetilde{\mathbf{A}}
\widetilde{\mathbf{D}}^{-\frac{1}{2}}
\mathbf{H}^{(k-1)}W^{l}
\big)
\end{equation}
could also be analyzed on node-level with $\rho(\cdot)$ being the $\operatorname{MEAN}$ operator and 
\begin{equation}
\label{kipf-local}
\phi(h_v^{(l-1)}, a_v^{(l)}) = \sigma(h_v^{(l)} / \widetilde{\mathbf{D}}_{ii} + a_v^{(l)})
\end{equation}
where $\mathbf{H}^{(l)} \in \mathbb{R}^{N \times C}$ is the node features at a given $l$-th layer, $\widetilde{\mathbf{A}} = \mathbf{A} + \mathbf{I}$ and $\widetilde{\mathbf{D}}$ is a diagonal matrix with $\widetilde{\mathbf{D}}_{ii} = \sum_j \widetilde{\mathbf{A}}_{ij}$.
The equivalence between Equation~\ref{kipf-global} and Equation~\ref{kipf-local} could meanwhile serve as an example to show the equivalence between graph-level- and node-level-view of graph convolution / message-passing procedures.

After the message-passing rounds are finished (and perhaps after post-processing steps consisting of feed-forward layers), the final node representation $\mathbf{H}^{(k)} = [\mathbf{h}^{k}_1, \mathbf{h}^{k}_2, ..., \mathbf{h}^{k}_N]$ could either be connected to a regressor for \textit{node-level} regression or classification, or could be pooled together globally using for example a sum function $h_\mathcal{G} = \sum_{v \in \mathcal{V}} h_v^{k}$ to form the graph representation for \textit{graph-level tasks}.

\subsection{Aggregators}
\label{subsec:agg}
Formally, in the context of multisets and GNNs, the aggregator (or aggregation function) is a function that maps a multiset to the same space of the elements in that multiset.
\begin{equation}
\label{dimension}
\rho: \{\mathbb{R}^{C} \} \rightarrow \mathbb{R}^{C}. \: \text{(dimensionality requirement)}
\end{equation}
Moreover, since there is no notion of ordering in multiset, to qualify for an aggregator, $\rho$ has to be permutation invariant, i.e., for any permutation $P$, 
\begin{equation}
\label{invariance}
\rho(\mathbf{X}) = \rho(P\mathbf{X}). \: \text{(invariance requirment)}
\end{equation}
Practically, under the context of GNN, the multiset input is usually the neighborhood of a node.
Common choices of aggregation function includes:
$\operatorname{SUM}(\mathbf{X}) = \sum_i x_i, \operatorname{MEAN}(\mathbf{X}) = \frac{1}{N} \sum_i x_i$, and
$\operatorname{MAX}(\mathbf{X}) = \max x_i$,
where $\mathbf{X} = \{ x_i, i = 1, 2, ..., N\}.$
More sophisticated architectures, namely attention~\cite{velickovic2018graph}, or Janossy pooling~\cite{DBLP:journals/corr/abs-1811-01900} with arbitrary composing neural function, as long as they satisfy the dimensionality and invariance requirements (Equation~\ref{dimension} and \ref{invariance}), can be used as aggregators.

\section{Related Work}
\subsection{Bayesian Neural Networks}
\label{bnn}
Under the Bayesian formalism, given sets of (input graph, measurement) pairs as training data $\mathcal{D} = \{ \mathcal{G}^{(i)}, y^{(i)}, i=1,2,3,...,n\}$, the probability distribution of the unknown quantity of the measurement $y^*$ which corresponds to the new input graph $\mathcal{G}^*$ could be modelled with respect to the posterior distribution of the neural network parameters $\theta$ as:
\begin{equation}
\label{b}
p(y^{*} | \mathcal{D}, \mathcal{G}^*) = \int p(y^{*} | \mathcal{G}^*, \mathbf{\theta}) p(\mathbf{\theta} | \mathcal{D}) \operatorname{d} \theta.
\end{equation}
This integral, of course, is not tractable and has to be approximated.
The most straightforward way to approximate Equation~\ref{b} would be to sample the \textit{interesting} regions on the weight space and form an ensemble of predictions from Monte Carlo (MC) samples~\cite{neal2012bayesian, mackay1992practical}.
Alternatively, under variational inference (VI) frameworks, we rewrite the posterior distribution of the parameters as a tractable one depending on another set of variational parameters~\cite{Blei_2017, blundell2015weight}.
Dropout~\cite{JMLR:v15:srivastava14a, gal2016dropout} could be regarded as a Bayesian approximation as well.
When the masks of dropout adopts continuous form under Gaussian distribution, whose parameters are jointly optimized, it is equivalent to a variational inference with multiplicative noise~\cite{kingma2015variational}.
Finally, if one uses a delta distribution to model the parameters and searches for the most likely set of neural network parameters under Equation~\ref{b}: $\theta^\text{MAP} = \underset{\theta}{\arg\max} \, p(\theta | \mathcal{D})$, a standard neural network is recovered.

\subsection{Stochastic Regularization for GNNs}
The methods introduced in Section~\ref{bnn} which quantifies uncertainty on weight spaces are all compatible with GNNs.
Additionally, there have been works that introduce stochasticity into GNNs by randommly modifying the structure of the graph: \citet{zhang2018bayesian} regards the input graph as a realization of an underlying graph generated by some random graph generation process;
\citet{DBLP:journals/corr/abs-1801-10247} (FastGCN) randomly removes nodes of input graphs under a Bernoulli distribution;
\citet{DBLP:journals/corr/abs-1907-10903} (DropEdge) randomly removes edges of input graphs; 
\citet{hasanzadeh2020bayesian} (Graph DropConnect) similarly removes edge, although edges are removed independently for each feature dimension.

Under the scheme we propose in this paper, we do not sample on the weight space but rather inject randomness into the aggregation process.
The noise we inject is different at each message passing step whereas in a Bayesian neural network, the weights are kept constant across rounds of message passing.

In Section~\ref{unifiyng}, we show that Dropout, FastGCN, DropEdge, and Graph DropConnect could be viewed as special cases of STAG where the noise distribution in Bernoulli with various dependency structures.
Expanding on these work, we develop a class of methods where we perturb the aggregation process by a \textit{continuous, multiplicative} noise.
In subsequent sections, we theoretically show that STAG with either discrete or continuous noise distributions remedies the over-smoothing tendency as well as limited expressiveness, while STAG with continuous noise display better empirical performance especially when the noise distribution is adaptive.

\section{Theory: Stochastic Aggregation (STAG)}
\label{stag}
At the aggregation stage of graph convolution, STAG samples a set of weights for the edges in the graph under some distribution to come up with effective weighted adjacency matrix,
\begin{equation}
\begin{split}
Z \sim p(Z)\\
\hat{\mathbf{A}} = \mathbf{A} \odot Z
\end{split}
\end{equation}
where $\odot$ denotes Hadamard product.
In this paper we focus on the continous classes of distribution where $p$ takes a Gaussian
\begin{equation}
p(Z) = \mathcal{N}(\mu_Z, \sigma_Z)
\end{equation}
or uniform form
\begin{equation}
p(Z) = \operatorname{Uniform}(a_Z, b_Z).
\end{equation}

We assume that the random mask $Z \in \mathbb{R}^{N \times N}$ has at most $\mid \mathcal{E} \mid$ non-zero elements and are $Z_{ij} = 0$ wherever $\mathbf{A}_{ij} = 0$ (same sparsity).
The weight $Z$ could be either same or different for each layer of message-passing and feature.
If we pack the weight $Z$ across all features and across all message-passing steps, we have a four-dimensional tensor $\mathbf{Z} \in \mathbb{R}^{L, C, N, N}$ where $L$ is the number of steps, $C$ is the number of features (assuming uniform across layers albeit practically it could be different, in which case $\mathbf{Z}$ becomes a \textit{ragged tensor}), and $N$ is the number of nodes.
Such tensor $\mathbf{Z}$ then controls the behavior of the STAG scheme across message-passing rounds.

On a node level, during the $l$-th layer, for each feature channel $i = 1, 2, 3, .., C$, for node $v$, Equation~\ref{agg} becomes
\begin{equation}
\label{theory}
a_v^{(l)}[:, i] = \rho^{(k)}
\big(
(\mathbf{A}_{uv} \cdot \mathbf{Z}[l, i, u, v]) h_u^{(l-1)}[:, i], u \in \mathcal{N}(v)
\big),
\end{equation}
where $\mathcal{N}(\cdot)$ denotes the neighborhood operator of the node.

During inference, with the neural network weights fixed, the joint distribution of the representations at each layer together with the weight tensor can be written as follow:
\begin{multline}
\label{model}
p(\mathbf{H}^{(L)},  \mathbf{H}^{(L-1)}, ..., \mathbf{H}^{(1)}, \mathbf{Z} \mid \mathbf{H}^{(0)}, \mathcal{G}) = \\  \prod\limits_{l=1}^{L} p(\mathbf{Z}[l, :, :, :]) p(\mathbf{H}^{(l)} \mid \mathbf{H} ^{(l-1)}, \mathbf{Z}[l, :, :, :], \mathcal{G}).
\end{multline}
Marginalizing the weights as well as the intermediary representations, we write the marginal distribution of the output of the last layer $\mathbf{H}^{(L)}$ as 
\begin{multline}
\label{posterior}
p(\mathbf{H}^{(L)} \mid \mathbf{H}^{(0)}, \mathcal{G}) = \\ \int \prod\limits_{l=1}^{L} p(\mathbf{Z}) P(\mathbf{H}^{(l)} \mid \mathbf{H} ^{(l-1)}, \mathbf{Z}[l, :, :, :], \mathcal{G}) \operatorname{d}\mathbf{Z}.
\end{multline}

Viewed from a graph level this setting is similar to \citet{zhang2018bayesian} as we take an ensemble of noise-perturbed (same in structure but different weights of edges) graphs as the input for inference.
Compared to \citet{zhang2018bayesian}, our formulation does not need the overhead to conduct convolution for drastically different graphs.
Also, our assumption, more mild and conservative, are reflective of the nature of many classes of graphs---in molecules, chemical bonds become shorter and longer as they vibrate; in a society, how strong the friendship between two certain persons are dynamic rather than static.

\section{STAG as a Unifying Framework}
\label{unifiyng}

\begin{figure}
    \centering
    \includegraphics[width=0.48\textwidth]{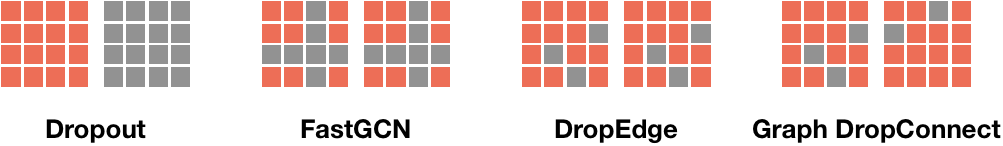}
    \caption{\textbf{Illustration of stochastic regularizing methods on graphs.} Each block denotes a mask on adjacent matrix for each feature. Grey indicates zero.}
    \label{block}
\end{figure}

\textbf{Dropout}~\cite{JMLR:v15:srivastava14a} could be regarded as a case of Equation~\ref{theory} with the first two dimensions being independent and last two dimensions shared;
\begin{equation}
\mathbf{Z}[l, i, :, :] \sim q(Z), Z \in \mathbb{R}^{L \times C};
\end{equation}
$q$ adopts an independent Bernoulli form for binary Dropout and independent normal form for Gaussian Dropout.~\cite{kingma2015variational}

\textbf{FastGCN}~\cite{DBLP:journals/corr/abs-1801-10247} is a case of Equation~\ref{theory} with the first two dimension in $\mathbf{Z}$ sharing samples and 
\begin{equation}
\mathbf{Z}[:, :, v, v] \sim q(Z), Z \in \mathbb{R}^{N},
\end{equation}
with $q$ adopting an independent Bernoulli form.

\textbf{DropEdge}~\cite{DBLP:journals/corr/abs-1907-10903} samples the edges the graph:
\begin{equation}
\mathbf{Z}[:, :, u, v] \sim q(Z), Z \in \mathbb{R}^{\mid \mathcal{E} \mid}
\end{equation}
with independent Bernoulli distribution.

\textbf{Graph DropConnect}~\cite{hasanzadeh2020bayesian} samples edges of graph with Bernoulli distribution independent for each feature:
\begin{equation}
\mathbf{Z}[l, c, u, v] \sim q(Z), Z \in \mathbb{R}^{L \times C \times \mid \mathcal{E} \mid}.
\end{equation}

See Figure~\ref{block} for an illustration of these regularization methods.
More generally, elements of $\mathbf{Z}$ could adopt arbitrary distributions with arbitrary dependency structures.
For instance, they could be completely independent among each other or they could be dependent on variational parameters per-layer, per-graph, per-node, or per-edge, which could in turn be learned from another neural architecture.

\section{STAG Increases Expressiveness}
\label{sec-expressive}

\begin{figure}[htbp]
    \centering
    \includegraphics[width=0.3\textwidth]{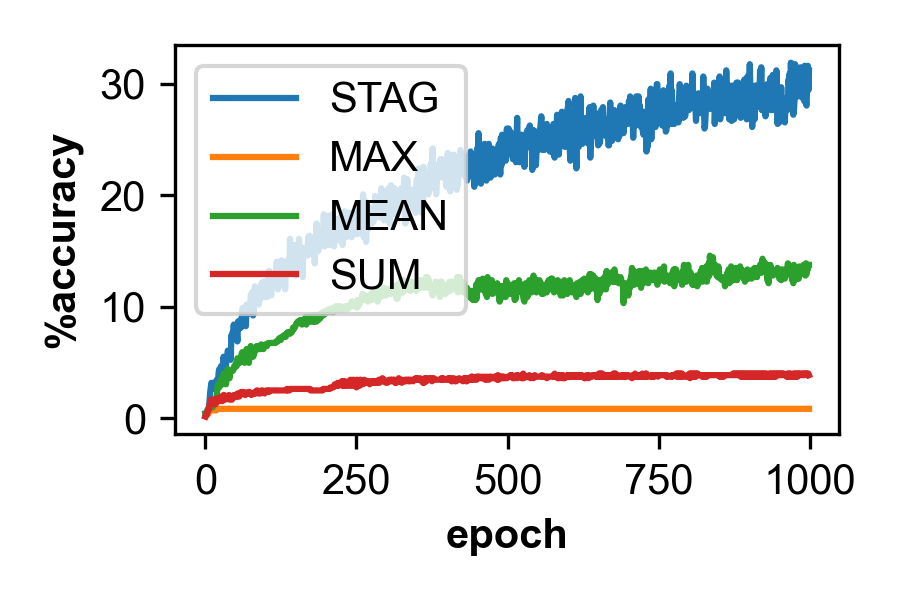}
    \caption{\textbf{STAG allows multisets to be distinguished by feedforward neural networks. } A feed-forward neural network is trained to classify the identity of multisets with underlying set $\{-4, -2, -1, 1, 2, 4\}$ and maximum multiplicity $4$, aggregated using deterministic and stochastic aggregators.}
    \label{fig:multiset}
\end{figure}

The aggregator $\rho$ in Equation~\ref{agg} plays a crucial role in GNNs as it allows neighborhood information to be summarized to form node embeddings, thereby allowing GNNs to approximate Laplacian smoothing on a graph level and WL-test on a node level.
The expressiveness of aggregators has been studied in \citet{xu2018powerful} and \citet{corso2020principal}, for countable and continuous features, respectively.
Particularly, \citet{xu2018powerful} has shown that, among the common aggregation functions, only $\operatorname{SUM}$ is injective if used with \textit{deep multisets} whilst $\operatorname{MEAN}$ and $\operatorname{MAX}$ qualitatively display some desirable merits namely capturing the distribution of elements or distinctive elements.
On the other hand, according to \citet{corso2020principal}, even on $\mathbb{R}$, no aggregator by itself is injective if the support of the multiset is uncountable:
\newtheorem*{number_of_aggregators}{Theorem 1 from \citet{corso2020principal}}
\begin{number_of_aggregators}
In order to discriminate between multisets of size $N$ whose underlying set is $\mathbb{R}$, at least $N$ aggregators are needed.
\end{number_of_aggregators}
Conceptually, such limitation in expressiveness could be seen as a result of the degeneracy under the algebra of vectors (See Figure~\ref{sym}).
For example, for any $x \in \mathbb{R}^C$, we always have $x = \operatorname{MEAN}(\{x, x, x\}) = \operatorname{MEAN}(\{x, x\}) = \operatorname{MAX}(\{x, x, x\}) = \operatorname{MAX}(\{x, x\})$, where the brackets $\{\cdot\}$ denotes multisets.
Nonetheless, random variables on $\mathbb{R}^C$ do not always follow the same algebra (See Figure~\ref{sym}).

As such, when the aggregation process is stochastic, the limitation in expressiveness could be overcame.
To formalize this finding, we now treat the stochastic aggregation process as a basic, deterministic aggregation ($\operatorname{SUM}$, $\operatorname{MEAN}$, $\operatorname{MAX}$ or other discussed in Section~\ref{subsec:agg}) over a \textit{perturbed multiset}, with the following definition:
\newtheorem{definition}{Definition}
\begin{definition}

Suppose $X$ is a multiset with support $\mathbb{R}^C$. 
A perturbation of multiset $X$ using noise distribution $q$ on the same space is defined as:

\begin{equation}
\xi_q(X) = \{z_i \odot x_i, z_i \sim q, x_i \in X\}.
\end{equation}
\end{definition}
The resulting perturbed multiset is a multiset of random variables.
We now prove that only one stochastic aggregator is needed to discriminate between multisets by proving that a \textit{deterministic} aggregator can discriminate between \textit{perturbed multisets.}

\newtheorem{theorem}{Theorem}
\begin{theorem}
\label{one}
Only one aggregator $\rho$ is needed to discriminate between multisets $X$ with support $\mathbb{R}^C \setminus \{ \mathbf{0} \}$ after perturbation with some noise distribution $q$ on $\mathbb{R}^C$.
More formally, under some distribution $q$, $\rho(\xi_q(\mathbf{X}))$ and $\rho(\xi_q(\mathbf{Y}))$ are equal in distribution iff. there exist a permutation $P$ s.t. $[P\mathbf{X}]_i = [\mathbf{Y}_q]_i, \forall 1 \leq i \leq \mid \mathbf{X} \mid$.\footnote{See proof in Section~\ref{supsec-theorems}}
\end{theorem}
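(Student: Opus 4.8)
The plan is to prove the statement for the concrete pair consisting of $\rho = \operatorname{SUM}$ and a scalar, strictly-positive edge-weight noise, which is moreover exactly the form appearing in the GNN layer of Equation~\ref{theory}. Write $z_i = w_i \mathbf{1}$ with $w_i$ i.i.d. from a scalar law $q$, so the perturbed aggregate is $S_X = \sum_i w_i x_i$. The easy direction is immediate: if $X$ and $Y$ agree as multisets (there is a permutation $P$ with $[P\mathbf{X}]_i = [\mathbf{Y}]_i$), then since the $w_i$ are i.i.d. and $\rho$ is permutation invariant, relabeling identifies the two perturbed multisets and hence $S_X \stackrel{d}{=} S_Y$.

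For the hard direction I would pass to characteristic functions. By independence across elements,
\[
\phi_{S_X}(t) = \prod_i \mathbb{E}\big[e^{i w_i \langle t, x_i\rangle}\big] = \prod_i \phi_q(\langle t, x_i\rangle), \quad t \in \mathbb{R}^C,
\]
so equality in distribution is equivalent to $\prod_i \phi_q(\langle t, x_i\rangle) = \prod_j \phi_q(\langle t, y_j\rangle)$ for all $t$. Because $\phi_q(0)=1$ and $\phi_q$ is continuous, for a suitable $q$ both sides are nonzero and analytic near $t=0$, so I may take logarithms and obtain $\sum_i \psi_q(\langle t, x_i\rangle) = \sum_j \psi_q(\langle t, y_j\rangle)$ with $\psi_q = \log\phi_q$. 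Expanding $\psi_q(s) = \sum_{k\geq 1} \tfrac{i^k}{k!}\kappa_k s^k$ in the cumulants $\kappa_k$ of $q$ and expanding $\langle t, x\rangle^k$ by the multinomial theorem, the coefficient of $t^\alpha$ on each side equals a fixed constant $\tfrac{i^{|\alpha|}}{|\alpha|!}\kappa_{|\alpha|}\binom{|\alpha|}{\alpha}$ times the multivariate power sum $p_\alpha(X) = \sum_i x_i^\alpha$.

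The key creative step is the choice of $q$: I would take $q$ to be exponential (or any law whose cumulants $\kappa_k$ are \emph{all} nonzero), so that the constant above never vanishes and matching the analytic functions near $0$ forces $p_\alpha(X) = p_\alpha(Y)$ for every multi-index $\alpha \neq 0$. This is precisely where Gaussian or symmetric noise is inadequate, as it annihilates all cumulants beyond the second; the freedom to pick ``some $q$'' is exactly what makes one aggregator suffice. Using a scalar (diagonal) weight rather than componentwise-independent noise is also essential, since it couples the coordinates and lets $p_\alpha$ record \emph{joint} moments rather than merely the per-coordinate marginals, which alone cannot separate multisets of vectors.

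It remains to convert matched moments into multiset equality. The finite signed measure $\mu = \sum_i \delta_{x_i} - \sum_j \delta_{y_j}$ is supported on $\mathbb{R}^C \setminus \{\mathbf{0}\}$ and satisfies $\int x^\alpha \, d\mu = 0$ for all $\alpha \neq 0$; a Vandermonde argument on its (necessarily nonzero) atoms then forces $\mu = 0$, yielding both $|X|=|Y|$ and the desired permutation. I expect the main obstacle to be conceptual rather than computational, namely selecting a noise law that encodes every power sum and exploiting the exclusion of $\mathbf{0}$ — without which extra zero atoms would be invisible to all positive-degree moments — to recover the multiplicities and size, with the analyticity/log-expansion bookkeeping being the only technical care required.
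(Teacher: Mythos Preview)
Your argument is correct and follows the same skeleton as the paper's proof: factor the transform of $S_X=\sum_i w_i x_i$ over the elements, extract power sums from the series expansion, and then invoke a Vandermonde/Newton-type identification of multisets from their power sums, using the exclusion of $\mathbf{0}$ to recover sizes and multiplicities. The paper carries this out only on $\mathbb{R}\setminus\{0\}$ with $\rho=\operatorname{SUM}$ and $q=\operatorname{Uniform}(0,1)$, comparing moment generating functions $\prod_i\frac{e^{x_i t}-1}{x_i t}$ and reading off $\sum_i x_i^n=\sum_j y_j^n$ from the Taylor expansion before asserting the final identification; the extension to $\mathbb{R}^C$ is left as a remark. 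Your route differs in two genuinely useful ways: you work directly in $\mathbb{R}^C$ by pairing with $t$ and using the multinomial expansion to obtain all multivariate power sums $p_\alpha$, and you pass through $\log\phi_q$ so that the obstruction is transparently the (non)vanishing of cumulants, which motivates your choice of an exponential law. This is cleaner and more general than the paper's one-dimensional MGF computation; conversely, the paper's $\operatorname{Uniform}(0,1)$ choice dovetails with the explicit worked examples in Table~\ref{table-multiset}, and its MGF is an entire function whose zero set already encodes the multiset, so the sketchier power-sum step there can be tightened without the cumulant machinery.
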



Comparing Theorem~\ref{one} and \citet{corso2020principal}, one can think of the perturbation on the multiset as a method to endow the aggregators the ability to pack multiple aggregators into one.
Moreover, note that Theorem~\ref{one} works on \textit{multisets} as opposed to the transformed \textit{deep multisets} as the Lemma 5 in \citet{xu2018powerful} and is therefore more general.
Finally, such gain in expressiveness would not disappear even if one marginalized over the noise distribution, as long as she does so after nonlinearity.
\newtheorem{lemma}{Lemma}
\begin{lemma}
\label{expectation-ok}
There exist some element-wise function $\sigma$ such that
\begin{equation}
\mathbb{E}_{X \sim q_X}(\sigma(X)) \neq \mathbb{E}_{Y \sim q_Y}(\sigma(Y))
\end{equation}
if $X \sim q_X$ and $Y \sim q_Y$ are not equal in distribution.
\end{lemma}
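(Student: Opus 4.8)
The plan is to recognize Lemma~\ref{expectation-ok} as a restatement of the classical fact that a probability law is uniquely determined by the expectations of a sufficiently rich, \emph{separating} family of test functions. Equivalently, I would argue by contraposition: if no element-wise $\sigma$ separated $X$ and $Y$ in expectation, then the two would have to coincide in distribution. So it suffices to exhibit an explicit family of element-wise maps $\{\sigma_\theta\}$ whose expectations already pin down the law of a scalar random variable, so that whenever $q_X \neq q_Y$ at least one member of the family must separate them.

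For the concrete separating family I would use the characteristic function. Taking $\sigma_t(x) = \cos(tx)$ and $\sigma_t(x) = \sin(tx)$---both bounded, continuous, element-wise maps---the pair $(\mathbb{E}[\cos(tX)], \mathbb{E}[\sin(tX)])$ is exactly $\varphi_X(t) = \mathbb{E}[e^{itX}]$. By the uniqueness theorem for characteristic functions, $q_X \neq q_Y$ forces $\varphi_X(t_0) \neq \varphi_Y(t_0)$ for some $t_0$, so either the cosine or the sine part differs in expectation and the corresponding $\sigma_{t_0}$ is the desired function. An even more elementary choice is the step function $\sigma_a(x) = \mathbf{1}_{\{x \le a\}}$: since $\mathbb{E}[\sigma_a(X)] = F_X(a)$, two distinct cumulative distribution functions must disagree at some $a$, giving the separation directly. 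Either route reduces the lemma to a textbook uniqueness statement, so there is no substantial calculation to grind through.

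The step I expect to require the most care is the passage from scalar to $\mathbb{R}^C$-valued variables under a \emph{strictly} element-wise $\sigma$. When $\sigma$ acts coordinate-wise, $\mathbb{E}[\sigma(X)]$ depends only on the one-dimensional marginals of $X$, so the argument above separates $X$ and $Y$ precisely when they differ in at least one marginal; two laws that share all marginals but differ only in their dependence structure cannot be distinguished this way. In the STAG setting this is not an obstacle, since the injected noise and the aggregation act per feature channel and the relevant laws are therefore compared channel-by-channel. I would accordingly carry out the proof in the scalar case and then lift it coordinate-by-coordinate, stating explicitly that the separating coordinate is one in which the marginals of $X$ and $Y$ disagree, and that the witness $\sigma$ from the characteristic-function (or step-function) argument applied in that coordinate suffices.
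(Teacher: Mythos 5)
Your proof is correct, and at its core it uses the same idea as the paper: exhibit a test function whose expectation acts as a separating functional for probability laws. The paper's entire argument is the single remark that one may take $\sigma$ to be a ``switch function'' equal to $1$ on a region where the density of $X$ strictly exceeds that of $Y$ and $0$ elsewhere --- i.e.\ an indicator whose expectation is the probability of that region. Your version is more careful in two respects that are worth keeping. First, your separating families (the characteristic-function pair $\cos(tx),\sin(tx)$, or the CDF indicators $\mathbf{1}_{\{x\le a\}}$) do not presuppose that $q_X$ and $q_Y$ admit densities; the paper's switch-function construction implicitly does, and making it rigorous for general laws would require passing through the Hahn decomposition of the signed measure $q_X - q_Y$, which your routes avoid entirely. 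Second, you correctly flag that a strictly element-wise $\sigma$ applied to an $\mathbb{R}^C$-valued variable only sees the one-dimensional marginals, so the lemma as literally stated fails for $C>1$ when two laws share all marginals but differ in their dependence structure; the paper does not acknowledge this, and your observation that the STAG pipeline compares laws channel-by-channel is the right way to reconcile the lemma with its intended use. The only thing I would tighten is the phrasing of that caveat: state explicitly that you are proving the lemma under the (harmless in context) additional hypothesis that $q_X$ and $q_Y$ differ in at least one coordinate marginal, rather than leaving it as an informal aside.
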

One example of such activation function $\sigma$ is a switch function that equals $1$ on a region where the density of $X$ is strictly greater than $Y$ and $0$ elsewhere.

Combining Theorem~\ref{one} and Lemma~\ref{expectation-ok}, we have that
\begin{lemma}
For some noise distribution $q$, some aggregation function $\rho$, and some element-wise nonlinearity function $\sigma$ the mapping from a multiset with support $\mathbb{R}^C$ to $\mathbb{R}^C$
\begin{equation}
\label{injective}
f(X) = \mathbb{E}_q(\sigma(\rho(\xi_q(X))))
\end{equation}
is injective.
\end{lemma}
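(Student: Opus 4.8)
The plan is to read the Lemma as the straightforward composition of Theorem~\ref{one} and Lemma~\ref{expectation-ok}, exploiting the phrase \emph{for some} to choose $q$, $\rho$ and $\sigma$ favourably. First I would fix a pair $(q,\rho)$ for which Theorem~\ref{one} applies, so that the assignment
\begin{equation*}
X \longmapsto \mathcal{L}\big(\rho(\xi_q(X))\big),
\end{equation*}
sending a multiset to the law of its aggregated perturbation, is injective on multisets: $X$ and $Y$ agree up to permutation iff $\rho(\xi_q(X))$ and $\rho(\xi_q(Y))$ are equal in distribution. Since $f(X)=\mathbb{E}_q(\sigma(\rho(\xi_q(X))))$ is nothing but the \emph{read-out} $L\mapsto \mathbb{E}_{A\sim L}[\sigma(A)]$ evaluated at the law attached to $X$, it suffices to show that this read-out separates the laws in the image family $\mathcal{F}=\{\mathcal{L}(\rho(\xi_q(X)))\}$.

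Next I would invoke Lemma~\ref{expectation-ok} to separate the laws \emph{pairwise}. For any two distinct multisets $X\neq Y$, Step~1 yields distinct laws, so Lemma~\ref{expectation-ok} supplies an element-wise nonlinearity (for instance the switch function noted after its statement, equal to $1$ on a region where one density strictly dominates and $0$ elsewhere) whose expectation differs on the two laws, giving $f(X)\neq f(Y)$ for that pair. This already delivers injectivity in the weak, pair-by-pair sense, and is the whole substance of the claim if one reads \emph{for some $\sigma$} as permitting the nonlinearity to be tailored to the two multisets being compared.

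The hard part will be upgrading this pairwise separation to a \emph{single} $\sigma$ that is injective on the entire domain at once, because each application of Lemma~\ref{expectation-ok} returns a $\sigma$ depending on $X$ and $Y$, whereas injectivity of $f$ demands one fixed map. Here I expect a genuine tension rather than a routine calculation: the read-out $\mathbb{E}_q(\sigma(\cdot))\in\mathbb{R}^C$ is a finite-dimensional functional, while $\mathcal{F}$ is an infinite-dimensional family, so no single fixed (measurable, let alone continuous) $\sigma$ can separate all multisets simultaneously by a dimension-counting argument. I would therefore make the statement rigorous in one of two ways, flagging this as the only real difficulty. Either (i) keep the pairwise reading, taking $\sigma=\sigma_{X,Y}$ from Lemma~\ref{expectation-ok} for each comparison; or (ii) enlarge the read-out to a sequence of expectations $\mathbb{E}_q(\sigma_k(\cdot))$---equivalently regard $\sigma$ as valued in a sufficiently rich space, e.g.\ recovering enough Fourier modes of the characteristic function, which coordinate-wise factorises over the elements of $X$ as a weighted sum of i.i.d.\ noises---so that the read-out distinguishes \emph{all} laws and injectivity follows globally from Step~1. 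Option (ii) is the clean fix; establishing that a fixed family of such $\sigma_k$ separates $\mathcal{F}$, with no accidental coincidence of moments across distinct multisets, is the step I expect to require the most care.
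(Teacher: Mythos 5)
Your proposal follows exactly the route the paper takes: the paper's entire justification for this lemma is the single phrase ``Combining Theorem~\ref{one} and Lemma~\ref{expectation-ok},'' and no further proof is given in the supplement. The difficulty you flag in your final step is therefore real and is \emph{not} resolved by the paper: Lemma~\ref{expectation-ok} supplies a separating $\sigma$ only \emph{per pair} of laws, so the composition yields pairwise separation with a pair-dependent nonlinearity, not injectivity of one fixed map $f$. Your dimension-counting worry can in fact be sharpened into a disproof for the natural reading of the statement: since $\sigma$ is element-wise, each coordinate of $f(X)$ is a permutation-invariant scalar functional of the corresponding coordinates of the elements of $X$, and for continuous integrable $\sigma$ its restriction to multisets of fixed size $N\ge 2$ is a continuous map from an $N$-dimensional space into $\mathbb{R}$, which cannot be injective. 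This is consistent with your option (ii): the paper's own proof of Theorem~\ref{one} recovers the multiset from the full moment generating function $t\mapsto\mathbb{E}[e^{tS}]$, i.e.\ from a one-parameter family of read-outs $\sigma_t(\cdot)=e^{t\,\cdot}$ rather than a single expectation, and the $\exp$ column of Table~\ref{table-multiset} is only verified on four hand-picked examples, not shown to be globally injective. So your reconstruction is faithful to the paper, and the repair you propose --- either the pairwise reading or a vector-/sequence-valued read-out separating all laws in the image family --- is genuinely needed for the lemma to hold as stated.
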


Evidently, the injectivity would sustain if the operation in Equation~\ref{injective} is stacked or injective functions (namely some neural networks) are employed between pooling, activation, and marginalization.
Consequently, following Theorem 3 in \citet{xu2018powerful}, a GNN using STAG with appropriate noise distribution and nonlinearity is as powerful as WL-test regardless of the type of basic deterministic aggregators and the countability of the underlying set of features.

We experimentally illustrate the increased expressiveness of STAG in distinguishing multisets in Table~\ref{table-multiset} and Figure~\ref{fig:multiset}.

\begin{table*}[htbp]
    \centering
    \footnotesize
    \resizebox{\textwidth}{!}{%
    \begin{tabular}{c | c c c c c | c}
    \hline
    & $\operatorname{SUM}$
    & $\operatorname{MEAN}$
    & $\operatorname{MAX}$
    & $\operatorname{MIN}$
    & $\operatorname{STD}$
    & $\mathbb{E}(\sigma(\rho(\xi_{q}(\cdot)))) = \mathbb{E}(\exp(\operatorname{SUM}(\xi_{\operatorname{Uniform}(0, 1)}(\cdot))))$ \\
    \hline
    $\{2, 2\}; \{0, 4\}$
    & $4 = 4$
    & $2 = 2$
    & $2 \textcolor{magenta}{\neq} 4$
    & $2 \textcolor{magenta}{\neq} 0$
    & $0 \textcolor{magenta}{\neq} 2$
    & $(e^2-1)^2 \textcolor{magenta}{\neq} e^4 - 1$
    \\
    
    $\{0, 2, 2\}; \{0, 0, 2\}$
    & $4 \textcolor{magenta}{\neq} 2$
    & $\frac{4}{3} \textcolor{magenta}{\neq} \frac{2}{3}$
    & $2 = 2$
    & $0 = 0$
    & $\frac{2\sqrt{2}}{3} = \frac{2\sqrt{2}}{3}$
    & $e^4 - 1 \textcolor{magenta}{\neq} e^2 - 1$ \\
    
    $\{0, 2, 2, 4\}; \{0, 0, 4, 4\}$
    & $8=8$
    & $2=2$
    & $4=4$
    & $0=0$
    & $\sqrt{6} \textcolor{magenta}{\neq} 2\sqrt{2}$
    & $ (-1 + e^2)^3 (1 + e^2) \textcolor{magenta}{\neq} (e^4-1)^2 $ \\
    
    $\{1, 1, 4\}; \{0, 3, 3\}$
    & 6 = 6 
    & 2 = 2
    & $4 \textcolor{magenta}{\neq} 3$
    & $1 \textcolor{magenta}{\neq} 0$
    & $\sqrt{2} = \sqrt{2}$
    &  $ (-1 + e)^3 (1 + e + e^2 + e^3) \textcolor{magenta}{\neq} (e^3-1)^2$ \\
    \hline
    \end{tabular}}
    \caption{\textbf{STAG distinguishes multisets indistinguishable by other aggregators.} Example multisets with underlying set $\mathbb{R}$ are taking from \citet{corso2020principal}. We chose $\rho=\operatorname{SUM}$ as basic aggregator, $q=\operatorname{Uniform}(0, 1)$ as the noise distribution, and $\sigma(\cdot) = \exp(\cdot)$ as activation function. Cases where the aggregator succeed in distinguish the multisets are marked red. It's worth mentioning that we proved Theorem~\ref{one} on the space excluding $\mathbf{0}$ as $\xi_q$ with multiplicative noise $q$ would not be able to count the number of zeros so the aggregated representation for $\{0, 2, 2\}$ and $\{2, 2\}$ are the same. One can circumvent this degeneracy by using an injective mapping onto $\mathbb{R}^+$ before aggregating.}
    \label{table-multiset}
\end{table*}

\section{STAG Alleviates Over-Smoothing}
\label{sec-oversmooth}
\begin{figure}
    \centering
    \includegraphics[width=0.4\textwidth]{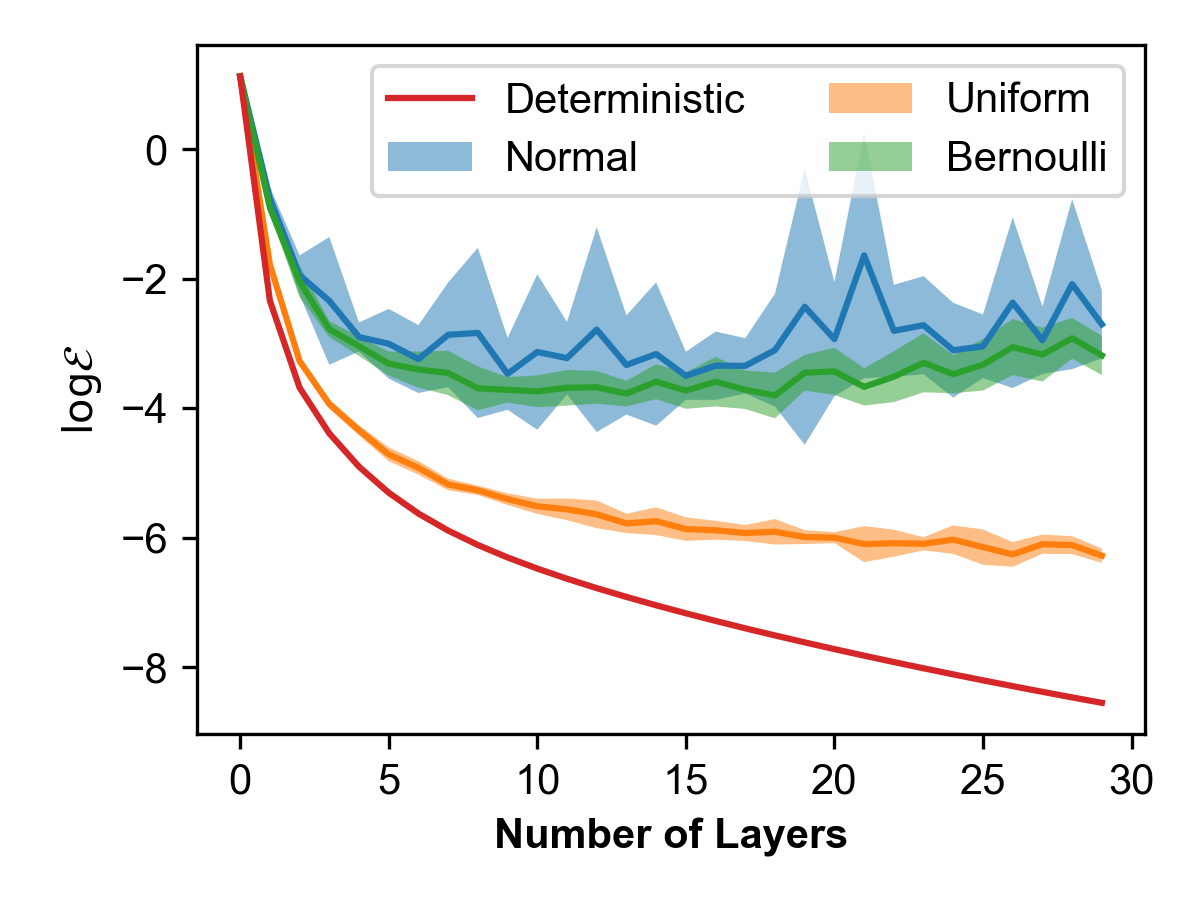}
    \caption{\textbf{STAG slows down the decrease of Dirichlet energy.} A random signal is applied on a random graph and aggregation are conducted multiple times. We perturb the input of aggregation by a multiplicative noise under distributions of various class with 0.5 mean and 0.25 variance and plot the trend of Dirichlet energy.}
    \label{dirichlet-plot}
\end{figure}
As more layers go into a GNN model, not only will it experience \textit{over-fitting} which is ubiquitous in all neural models when over-parametrized, but it will also have the tendency of \textit{over-smoothing}---a behavior studied in \citet{DBLP:journals/corr/abs-1801-07606, DBLP:journals/corr/abs-1905-10947} where node representation converge to a subspace dependent only on topology but not the initial features as a result of repetitive Laplacian smoothing.
Worse still, by the equivalence between WL-test and GNN layers~\cite{xu2018powerful}, only deep GNN architectures can capture longer-range dependencies, thus presenting a dilemma.
It has been studied in \citet{DBLP:journals/corr/abs-1907-10903} that dropping sufficient edges in a graph would make its second-smallest eigenvalue of Laplacian smaller, until it approaches zero (disconnected graph), delaying the smoothing process.
Here, to study whether and how STAG alleviates the over-smoothing tendency of GNNs, we adopt \citet{cai2020note}'s framework and focus on the Dirchilet energy of a signal on a graph:
\newtheorem{note-definition}{Definition 3.1 from \citet{cai2020note}}
\begin{note-definition}
Dirichlet energy $\mathcal{E}(f)$ of scalar function $f$ on the graph G is defined as
\begin{equation}
\mathcal{E}(f) = f^{T} \widetilde{\Delta}f
= \frac{1}{2} \sum A_{ij}(\frac{f_i}{\sqrt{1+d_i}} - \frac{f_j}{\sqrt{1+d_j}})^2,
\end{equation}
where $\widetilde{\Delta}$ is the normalized Laplacian $\widetilde{\Delta} = \mathbf{I} - \widetilde{D}^{-\frac{1}{2}}\widetilde{A}\widetilde{D}^{-\frac{1}{2}}$ and $d_i = D_{ii}$.
For a vector field $\mathbf{X} \in \mathbb{R}^{N \times C}$, Dirichlet energy is defined as
\begin{equation}
\mathcal{E}(\mathbf{X}) = \operatorname{tr}(\mathbf{X}^T \widetilde{\Delta} \mathbf{X}).
\end{equation}
\end{note-definition}
Now, using $\rho (\mathbf{X})$ to denote the simultaneous application of some neighrbohood aggregation function $\rho$ on node features $\mathbf{X}$, we state that
\begin{theorem}
\label{dirichlet-slow}
For any multiplicative noise distribution $q$ satisfying $\mid \mathbb{E}_{z \sim q}(z) \mid \geq 1$, any deterministic aggregator $\rho$, a node representation $\mathbf{X}$ of a graph, we have:
\begin{equation}
\mathbb{E}_{q}(\mathcal{E}(\rho(\xi_q(\mathbf{X})))) \geq \mathcal{E}(\rho (\mathbf{X}))
\end{equation}
\end{theorem}
In other words, the graph convolution with aggregation input perturbed by such distribution $q$ is expected to be less smooth and converge to the subspace independent of the initial features of graphs slower.
The condition $\mid \mathbb{E}_{z \sim q}(z) \mid \geq 1$ is sufficient but not necessary.
This bound also correspond to the finding in \citet{DBLP:journals/corr/abs-1905-10947} that increasing the scale of the neural network weights alleviates over-smoothing and enhances GNN performance.
We also experimentally illustrate Theorem~\ref{dirichlet-slow} in Figure~\ref{dirichlet-plot} and apply this on the benchmark test from \citet{DBLP:journals/corr/KipfW16} in Figure~\ref{depth}.
\begin{figure}
    \centering
    \includegraphics[width=0.45\textwidth]{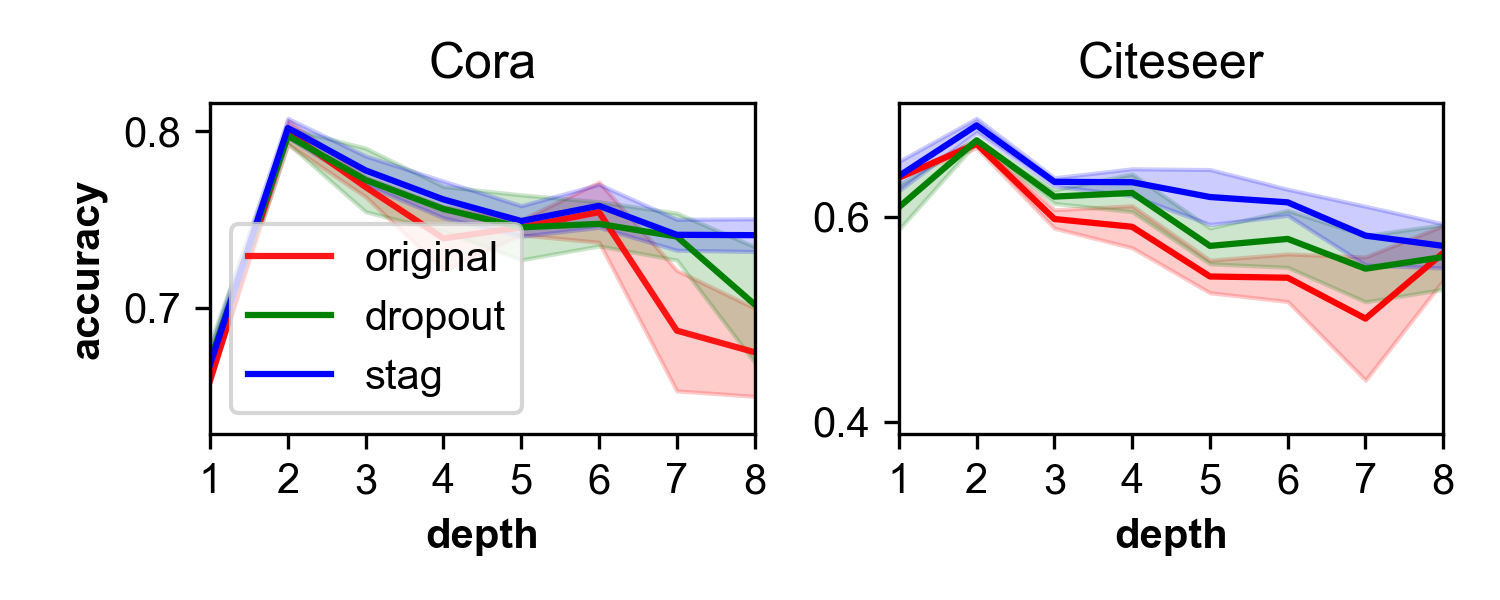}
    \caption{\textbf{STAG alleviates performance deterioration for deep GNN.} We adopt the experiment setting in \citet{DBLP:journals/corr/KipfW16} and plot the test set performance on Cora and Citeseer datasets against number of GCN layers with Dropout and STAG.}
    \label{depth}
\end{figure}

\begin{table*}[htbp]
    \centering
    \scriptsize
    \resizebox{\textwidth}{!}{%
    \begin{tabular}{c c c c c|c c c c }
    \hline
    & \multicolumn{2}{c}{Cora}
    & \multicolumn{2}{c}{Citeseer}
    & \multicolumn{2}{c}{ESOL}
    & \multicolumn{2}{c}{FreeSolv}\\
    \hline
    & 2 layers
    & 4 layers
    & 2 layers
    & 4 layers
    & 2 layers
    & 4 layers
    & 2 layers
    & 4 layers\\
    \hline
    Deterministic
    & 79.34 ± 0.22
    & 77.52 ± 0.33
    & 68.20 ± 0.34
    & 60.66 ± 1.19
    & 0.7003 ± 0.0638
    & 0.6435 ± 0.0550
    & 1.1643 ± 0.1105
    & 1.2230 ± 0.0589
    \\
    \hline
    
    $\operatorname{Normal}(1, 0.2)$
    & 76.36 ± 0.53
    & 76.36 ± 0.53
    & 67.64 ± 0.43
    & 61.72 ± 0.90
    & 0.6329 ± 0.0112
    & 0.6418 ± 0.0253
    & 1.1481 ± 0.0626
    & 1.2354 ± 0.0533
    \\
    
    $\operatorname{Normal}(1, 0.4)$
    & 79.74 ± 0.31
    & 77.68 ± 0.78
    & 67.90 ± 1.00
    & 62.18 ± 0.84
    & \textbf{0.5960 ± 0.0375}
    & \textbf{0.6096 ± 0.0334}
    & 1.1408 ± 0.0710
    & 1.1664 ± 0.0406
    \\
    
    $\operatorname{Normal}(1, 0.8)$
    & \textbf{80.34 ± 0.45}
    & 77.68 ± 0.73
    & 66.92 ± 1.99
    & 62.74 ± 0.62
    & 0.6589 ± 0.0323
    & 0.6240 ± 0.0349
    & 1.1703 ± 0.0767
    & 1.2308 ± 0.0904
    \\
    \hline
    
    $\operatorname{Uniform} (0.8, 1.2)$
    & 79.46 ± 0.31
    & \textbf{79.72 ± 0.37}
    & 67.86 ± 0.52
    & 61.38 ± 0.85
    & 0.6357 ± 0.0241
    & 0.6804 ± 0.0515
    & 1.1799 ± 0.0449
    & \textbf{1.1317 ± 0.0435}
    \\
    
    $\operatorname{Uniform} (0.6, 1.4)$
    & 79.72 ± 0.37
    & 76.58 ± 0.95
    & 67.94 ± 0.63
    & 61.26 ± 1.53
    & 0.6444 ± 0.0525
    & 0.6344 ± 0.0181
    & 1.2313 ± 0.1357
    & 1.2256 ± 0.1111
    \\
    
    $\operatorname{Uniform}(0.2, 1.8)$
    & 79.86 ± 0.34
    & 77.72 ± 0.84
    & 67.60 ± 0.59
    & 61.84 ± 1.06
    & 0.6712 ± 0.0432
    & 0.6478 ± 0.0301
    & 1.1549 ± 0.0664
    & 1.1614 ± 0.0780
    \\
    \hline
    
    $\operatorname{Bernoulli}(0.2)$
    & 80.08 ± 0.38
    & 77.86 ± 1.14
    & 68.06 ± 0.72
    & 62.26 ± 1.76 
    & 0.6488 ± 0.0293
    & 0.6331 ± 0.0280
    & 1.1424 ± 0.0922
    & 1.2301 ± 0.1113
    \\
    
    $\operatorname{Bernoulli}(0.4)$
    & 80.06 ± 0.67
    & 77.26 ± 0.55
    & 67.16 ± 0.41
    & 61.68 ± 0.41
    & 0.6069 ± 0.0340
    & 0.6368 ± 0.0321
    & 1.1732 ± 0.0555
    & 1.1717 ± 0.0749
    \\
    
    $\operatorname{Bernoulli}(0.8)$
    & 15.48 ± 0.58
    & 54.06 ± 3.12
    & 17.74 ± 0.61
    & 18.48 ± 0.82
    & 0.6200 ± 0.0184
    & 0.6290 ± 0.0207
    & \textbf{1.1394 ± 0.0714}
    & 1.1365 ± 0.0841
    \\
    \hline
    
    $\operatorname{DE}(0.2)$
    & 79.86 ± 0.38
    & 76.75 ± 0.70
    & 62.30 ± 1.35
    & 61.56 ± 0.85
    & 0.7381 ± 0.0202
    & 0.7416 ± 0.0248
    & 1.4772 ± 0.0463
    & 1.5224 ± 0.0488
    \\
    
    $\operatorname{DE}(0.4)$
    & 79.50 ± 0.81
    & 76.86 ± 1.02
    & 69.06 ± 0.88
    & \textbf{63.66 ± 1.65}
    & 0.7133 ± 0.0262
    & 0.7200 ± 0.0339
    & 1.5640 ± 0.0282
    & 1.5161 ± 0.0824
    \\
    
    $\operatorname{DE}(0.8)$
    & 71.08 ± 1.44
    & 67.28 ± 0.46
    & 57.52 ± 2.09
    & 46.70 ± 1.04
    & 0.7368 ± 0.0256
    & 0.7336 ± 0.0310
    & 1.5406 ± 0.0895
    & 1.6025 ± 0.0570
    \\
    
    $\operatorname{GDC}(0.2)$
    & 79.74 ± 0.32
    & 77.38 ± 0.73
    & \textbf{69.22 ± 0.87}
    & 61.78 ± 1.27
    & 0.6178 ± 0.0240
    & 0.6133 ± 0.0381
    & 1.2004 ± 0.0344
    & 1.1346 ± 0.0385
    \\
    
    $\operatorname{GDC}(0.4)$
    & 79.66 ± 0.45
    & 77.80 ± 0.68
    & 68.72 ± 0.47
    & 62.46 ± 0.87
    & 0.6400 ± 0.0426
    & 0.6345 ± 0.0200
    & 1.1783 ± 0.0457
    & 1.2135 ± 0.1248
    \\
    
    $\operatorname{GDC}(0.8)$
    & 76.14 ± 0.43
    & 76.18 ± 0.65
    & 60.80 ± 0.64
    & 57.60 ± 1.79
    & 0.6271 ± 0.0235
    & 0.6408 ± 0.0315
    & 1.2102 ± 0.0926
    & 1.1774 ± 0.0483
    \\
    \hline
    \end{tabular}%
}
\caption{\textbf{Performance of STAG on citation and molecule graph datasets.} For Cora and Citeseer, we report the node classification accuracy (higher is better); for ESOL and FreeSolv, we report the graph regression RMSE (log mol per liter and kcal/mol) (lower is better). We report the mean and standard deviation across five runs. DE: DropEdge~\cite{DBLP:journals/corr/abs-1907-10903}; GDC: Graph DropConnect~\cite{hasanzadeh2020bayesian}}
\label{bigtable}
\end{table*}

\section{Variational Inference with STAG}
With non-adaptive STAG, we have insofar been sampling the prior distribution whose parameters are treated as hyperparameters.
In this section, we tune such parameters jointly and adaptively.
Having generalized the noise distribution in STAG to continuous distributions affords us the flexibility to apply variational inference to learn interesting uncertainty structures.

We define a variational family $q(\mathbf{Z})$ over the noise injection variables per layer. For this , when operating under a Normal prior, we utilize a factorized Normal distribution which corresponds to a \textit{mean-field} assumption as the approximate posterior over edge weights $\mathbf{Z}$: $q(\mathbf{Z}) = \prod \limits_{l=1}^{L}q(\mathbf{Z}[l,:])$
with $q(\mathbf{Z}[l,:]) = \mathcal{N}(\mu_{\mathbf{Z}}, \sigma_{\mathbf{Z}})$ with $\phi$ denoting the collection of parameters for the variational family.

If we have a target variable $\mathbf{y}$ and a likelihood model $p(\mathbf{y}|\mathbf{H}^{(L)})$, with fixed weights, we can maximize the data evidence through the \textit{evidence lower bound} (ELBO) given as: 
\begin{multline}
\mathcal{L}(\phi) =\\ \mathbb{E}_{\mathbf{Z} \sim q(\mathbf{Z})} \text{log}\frac{p(\mathbf{y}|\mathbf{H}^L)p(\mathbf{H}^{(L)}, ..., \mathbf{H}^{(1)}, \mathbf{Z}, \mid \mathbf{H}^{(0)}, \mathcal{G})}{q(\mathbf{Z})}.
\end{multline}
A general recipe to construct losses given this would be to descent $-\mathcal{L}(\phi)$.


Now we provide four dependency structures to parametrize $\phi = \{ \mu_{\mathbf{Z}}, \sigma_{\mathbf{Z}} \}$.
In the following paragraphs, we denote the various algorithms by the dimensionality of the variational parameters $\{ \mu_{\mathbf{Z}}, \sigma_{\mathbf{Z}} \}$.
Since samples are always acquired in the space of $\mathbb{R}^{L, C, N, N}$, the rest of the dimensions are sampled independently.

$\textbf{STAG}_\textbf{VI} (\mathbb{R})$: Firstly, can have a simple variational inference (VI) model where $\mu_{\mathbf{Z}} \in \mathbb{R}$ and $\sigma_{\mathbf{Z}} \in \mathbb{R}$ are \textit{gloabl} parameters not dependent upon either the structure of the feature of the graph. 

$\textbf{STAG}_\textbf{VI} (\mathbb{R}^{C})$: Similarly to \citet{hasanzadeh2020bayesian}'s improvement over \citet{DBLP:journals/corr/abs-1907-10903}, we allow each \textit{feature} to learn its own noise, and have $\mu_{\mathbf{Z}} \in \mathbb{R}^{C}$ and $\sigma_{\mathbf{Z}} \in \mathbb{R}^{C}$.

$\textbf{STAG}_\textbf{VI} (\mathbb{R}^{\mid \mathcal{E} \mid})$: We expand our model into a transductive one by utilizing amortized inference over the variational parameters conditioned on the topology of the graph.
Now $\mu_{\mathbf{Z}} \in \mathbb{R}^{\mid \mathcal{E} \mid}$ and $\sigma_{\mathbf{Z}} \in \mathbb{R}^{\mid \mathcal{E} \mid}$ become local variables and are learned from a feedfoward neural network following another graph neural network.
\begin{gather*}
\mathbf{F} = \operatorname{GNN}(\mathcal{G}, \mathbf{X})\\
\mu_{\mathbf{Z}}, \sigma_{\mathbf{Z}} = \operatorname{NN}(\mathbf{F})
\end{gather*}
This would endow the model with generalizability towards unseen graphs.

$\textbf{STAG}_\textbf{VI} (\mathbb{R}^{\mid \mathcal{E} \mid \times C})$: Finally, if we further enrich the model by learning one set of variational parameter for each edge and for each feature, similarly connecting the node representation from an encoding network, we have $\mu_{\mathbf{Z}} \in \mathbb{R}^{\mid \mathcal{E} \mid \times C}$ and $\sigma_{\mathbf{Z}} \in \mathbb{R}^{\mid \mathcal{E} \mid \times C}$

\begin{table}[htbp]
    \centering
    \resizebox{0.45\textwidth}{!}{%
    \begin{tabular}{c c c | c c}
    \hline
        &  Cora & Citeseer & \#Params & Iter. Time \\
    \hline
    $\text{STAG}_\text{VI} (\mathbb{R})$ & 80.08 ± 0.73 & 66.53 ± 0.33 & 184k & 14.3 ms \\
    $\text{STAG}_\text{VI} (\mathbb{R}^{C})$ & 81.33 ± 0.62 & 68.53 ± 0.54 & 188k & 15.5 ms \\
    $\text{STAG}_\text{VI} (\mathbb{R}^{\mid \mathcal{E} \mid})$ & 80.18 ± 0.73 & 66.48 ± 0.53 & 386k & 24.5 ms\\
   $\text{STAG}_\text{VI} (\mathbb{R}^{\mid \mathcal{E} \mid \times C})$ & \textbf{81.38 ± 0.40} & \textbf{71.28 ± 0.65} & 1186k & 30.0 ms\\
    \hline
    BBGDC & 81.32 ± 0.53 & 70.96 ± 0.72 & 475k & 23.5 ms\\
    $\text{STAG}_\text{MLE}$ (best) & 80.34 ± 0.45 & 69.22 ± 0.87 & 184k & 9.3 ms\\
    \hline 
    \end{tabular}}
    \caption{Performance of STAG with variational inference (VI) on citation graph datasets}
    \label{citation-performance}
\end{table}

\begin{table}[htbp]
    \centering
    \resizebox{0.35\textwidth}{!}{%
    \begin{tabular}{c c c}
    \hline
    & ESOL & FreeSolv\\
    \hline
    $\text{STAG}_\text{VI} (\mathbb{R})$ &0.5956 ± 0.0200 & 1.1500 ± 0.0359 \\
    
    $\text{STAG}_\text{VI} (\mathbb{R}^C) $ & 0.6221 ± 0.0142 & 1.1561 ± 0.0803\\
    
    $\text{STAG}_\text{VI} (\mathbb{R}^{\mid \mathcal{E} \mid}) $ & 0.6901 ± 0.0427 & 1.3349 ± 0.1513\\
    
    $\text{STAG}_\text{VI} (\mathbb{R}^{\mid \mathcal{E} \mid \times C})$ & \textbf{0.5928 ± 0.0326} & \textbf{0.9958 ± 0.0768} \\
    \hline
    $\text{STAG}_\text{MLE}$ (best) & 0.5960 ± 0.0375 & 1.1394 ± 0.0714\\
    \hline
    \end{tabular}}
    \caption{Performance of STAG with variational inference (VI) on molecule graph datasets}
    \label{mol-performance}
\end{table}

We experimentally show the performance of $\text{STAG}_\text{VI}$ in Section~\ref{sec-expressive}, Table~\ref{citation-performance}, and Table~\ref{mol-performance}.
In Section~\ref{subsec-bbb}, we also compare $\text{STAG}_\text{VI}$ with a VI method Bayes-by-Backprop~\cite{blundell2015weight} that quantifies the \textit{weight uncertainty} rather than \textit{structural uncertainty}.

\section{Experiments}
\label{sec:exp}
\subsection{Illustrative Experiments}
\textbf{For Section~\ref{sec-expressive}: STAG Increases Expressiveness}:

To show the superior expressiveness of STAG, we adopt the example from \citet{corso2020principal} and show in Table~\ref{table-multiset} that the aggregator in Equation~\ref{injective} can succeed in distinguishing all the toy set which couldn't be distinguished by other aggregators.

Inspired by this example, we also perform a toy classification task where a feed-forward neural network of two layers with 128 units each is trained to distinguish multisets with underlying set $\{-4, -2, -1, 1, 2, 4\}$ with multiplicity up to $4$, aggregated by $\operatorname{SUM}$, $\operatorname{MAX}$, $\operatorname{MEAN}$ aggregators as well as stochastic aggregator $\mathbb{E}_q(\sigma(\rho(\xi_q(\cdot))))$ with $q$ being $\operatorname{Uniform}(0, 1)$ and $\rho$ being $\operatorname{SUM}$.
This mimics the aggregation-neural transformation process in graph neural networks.
We plot the training curve in Figure~\ref{fig:multiset}.

\textbf{For Section~\ref{sec-oversmooth}: STAG Alleviates Over-Smoothing}: 
Following the experimental setting in \citet{cai2020note}, we generate a random geometric graph with 200 nodes and radius 0.125. 
A input signal is generated by linearly combining the eigenvectors corresponding to the first 20 eigenvalues of the graph.
In each layer, we set the embedding of the node to be the average of its neighborhood with self-loop and normalization, which corresponds to $P = I - \widetilde{\Delta}$ and the message-passing step in \citet{DBLP:journals/corr/KipfW16}.
The aggregation is either deterministic or perturbed distributions of some class with mean 0.5 and variance 0.25.
Plotting the mean and standard deviation of Dirichlet energy across ten runs against number of layers of graph convolution in Figure~\ref{dirichlet-plot}, we notice that normal, uniform, and Bernoulli (which corresponds to DropEdge~\cite{DBLP:journals/corr/abs-1907-10903}) noise distribution all slow the decrease of Dirichlet energy.

To show that delaying the over-smoothing effect of GNNs also boost the performance on real-world datasets, we followed the protocols in \citet{DBLP:journals/corr/KipfW16} and trained Graph Convolutional Networks (GCN) with 16 units, ReLU activation function, and from two to eight number of layers.
Adam optimizer~\cite{kingma2017adam} with learning rate 0.01 are used for these experiments with $5 * 10^{-4}$ L2 regularization on the first layer.
The dropout probability is chosen to be 0.5 which is the same as the original paper.
The noise distribution for STAG is randomly set to be $\mathcal{N}(1, 1)$.
We plot the mean and standard deviation of the test set accuracy against the number of layers in Figure~\ref{depth}.

\subsection{None-Adaptive STAG}
We empirically show the benefits in Section~\ref{sec-expressive} and Section~\ref{sec-oversmooth} using node classification tasks on citation networks (Cora and Citeseer) and graph regression tasks on molecular graph (ESOL~\cite{delaney2004esol} and FreeSolv~\cite{mobley2014freesolv}) datasets.
We used Graph Convolutional Network (GCN)~\cite{DBLP:journals/corr/KipfW16} for all of our experiments, although STAG is compatible with almost all variants of GNNs (See Section~\ref{supsec-var}).
ReLU activation function is used everywhere.
We used the same training/validation/test split as in \citet{DBLP:journals/corr/KipfW16}: 140 training nodes, 500 validation nodes, and 1000 testing nodes for Cora and 120 training nodes, 500 validation nodes, and 1000 testing nodes for Citeseer.
For ESOL and FreeSolv, we randomly split training/validation/test with a 80-20-20 proportion with fixed random seed.
Using a similar experimental setting in \citet{hasanzadeh2020bayesian}, we report the performance of two- and four-layer graph convolutional network (GCN)\cite{DBLP:journals/corr/KipfW16} with 128 units each layer and ReLU activation function.
All models are trained for 2000 epochs with early stopping with Adam~\cite{kingma2017adam} optimizer with $5 * 10^{-3}$ learning rate a L2 regularization of $5 * 10^{-3}$ on the input layer.
Five runs are conducted for each experiment and we report the mean and standard deviation.

As shown in Table~\ref{bigtable}, STAG with various noise distributions almost constantly outperforms the deterministic baseline.
It is worth mentioning that the only difference between a STAG with a Bernoulli distribution and the Graph DropConnect~\cite{hasanzadeh2020bayesian} is that Graph DropConnect normalizes the adjacency matrix to have its original in-degree after dropping edges as it has been argued in \citet{hasanzadeh2020bayesian} that normalization remedies vanishing gradient.
When it comes to continuous distribution centered on $1$, however, the effect of normalizing operation is minimal and we empirically observed longer training time and worsened performance if normalizing operations are used for STAG with continuous noise distribution.

\subsection{STAG with Variational Inference}
We tested the $\text{STAG}_\text{VI}$ models on the same datasets: Cora, Citeseer, ESOL~\cite{delaney2004esol}, and FreeSolv~\cite{mobley2014freesolv}.
Since we observed that two-layer GNNs generally outperform four-layer ones, we used two-layer GCN~\cite{DBLP:journals/corr/KipfW16} throughout the experiments.
The rest of the experiment setting are identical to non-adaptive version of STAG, with the exception that we used a $10^{-3}$ learning rate for all models.
All of the feedforward neural networks to determine the variational parameters consist of two layers connected with ReLU activation function.
Using validation sets, we tuned the initializing values of $\mu$ and $\sigma$ parameters as well as the parameters in the prior on edge weights for each task.
The hyperparameters and settings for Graph DropConnect~\cite{hasanzadeh2020bayesian} is adopted from its original publication.

We notice that $\text{STAG}_\text{VI}$ constantly outperform the best of non-adaptive (or maximum-likelihood estimate, MLE) counterparts. 
The most sophisticated model, $\text{STAG}_\text{VI} (\mathbb{R}^{\mid \mathcal{E} \mid \times C})$, where the variational parameters are learned for every edge and every feature, consistently achieve the best results among the models.

With the competitive performance on small molecule datasets, we show that $\text{STAG}_\text{VI}$ can generalize across graphs.\footnote{For more experimental details, see Section~\ref{supsec-detail} and \url{https://github.com/yuanqing-wang/stag.git}}.

\section{STAG is Lightweight}
\textbf{Engineering Complexity: }The non-adaptive version of STAG could be implemented under the framework of Deep Graph Library (DGL)~\cite{wang2020deep} and PyTorch~\cite{NEURIPS2019_9015} in one line:
\begin{verbatim}
dgl.function.copy_src = lambda edges: {
    'm': edges.src['h'] 
        * Normal(1, 1).sample(
            edges.src['h'].shape
        )
}
\end{verbatim}

\textbf{Runtime Complexity: } For sparse adjacency matrix, the runtime complexity for sampling the weights on edges is $\mathcal{O}(\mid \mathcal{E} \mid)$, which is comparable with the graph convolution itself.
Using the one-line implementation in the previous section, we also benchmarked the running speed of our model with two layers on Cora dataset with 128 units on a Tesla V100 GPU and found a 5.9 to 9.3 ms increase in iteration time.
The iteration time on V100 GPUs of variational inference models are included in Table~\ref{citation-performance}.

\section{Discussion}
In this paper we proposed a unifying framework that injects stochasticity into the GNN systems by sampling the edge weights at each message-passing step.
Our framework increases the expressiveness of GNNs and alleviate their over-smoothing tendencies, as proved by theoretical analysis and evidenced by illustrative and benchmarking experiments.
We also develop a variational inference version of STAG where the parameters of the noise distribution is jointly tuned with the model parameters, which showed even further improvement in benchmark tests.

For $\text{STAG}_\text{VI}$, with more interesting dependency structures in the noise distributions, we would like to study whether GNNs with STAG would be able to surpass the expressiveness of WL-test, especially when used with higher-level variants of GNNs.
For non-adptive versions of STAG, we plan to further optimize the sampling efficiency of STAG models, in order to make STAG a simple and ultra-lightweight trick to boost the performance of GNNs.
We hope this work would encourage the community to develop probabilistic models that are topology-aware for graph-structured tasks.

\section*{Acknowledgements and Disclosures}
YW acknowledges support from NSF CHI-1904822 and the Sloan Kettering Institute.
YW is a member of the Chodera Lab at Sloan Kettering Institute; a complete funding history for the Chodera lab can be found at \url{http://choderalab.org/funding}. 
YW is among the co-founders and equity holders of Uli, Inc. and Uli (Shenzhen) Techonology Co.\ Ltd.

\bibliographystyle{unsrtnat}
\bibliography{main}

\begin{thebibliography}{29}
\providecommand{\natexlab}[1]{#1}
\providecommand{\url}[1]{\texttt{#1}}
\expandafter\ifx\csname urlstyle\endcsname\relax
  \providecommand{\doi}[1]{doi: #1}\else
  \providecommand{\doi}{doi: \begingroup \urlstyle{rm}\Url}\fi

\bibitem[Kipf and Welling(2016)]{DBLP:journals/corr/KipfW16}
Thomas~N. Kipf and Max Welling.
\newblock Semi-supervised classification with graph convolutional networks.
\newblock \emph{CoRR}, abs/1609.02907, 2016.
\newblock URL \url{http://arxiv.org/abs/1609.02907}.

\bibitem[Xu et~al.(2018)Xu, Hu, Leskovec, and Jegelka]{xu2018powerful}
Keyulu Xu, Weihua Hu, Jure Leskovec, and Stefanie Jegelka.
\newblock How powerful are graph neural networks?
\newblock \emph{arXiv preprint arXiv:1810.00826}, 2018.

\bibitem[Gilmer et~al.(2017)Gilmer, Schoenholz, Riley, Vinyals, and
  Dahl]{gilmer2017neural}
Justin Gilmer, Samuel~S Schoenholz, Patrick~F Riley, Oriol Vinyals, and
  George~E Dahl.
\newblock Neural message passing for quantum chemistry.
\newblock \emph{arXiv preprint arXiv:1704.01212}, 2017.

\bibitem[Hamilton et~al.(2017)Hamilton, Ying, and
  Leskovec]{hamilton2017inductive}
Will Hamilton, Zhitao Ying, and Jure Leskovec.
\newblock Inductive representation learning on large graphs.
\newblock In \emph{Advances in neural information processing systems}, pages
  1024--1034, 2017.

\bibitem[Battaglia et~al.(2018)Battaglia, Hamrick, Bapst, Sanchez-Gonzalez,
  Zambaldi, Malinowski, Tacchetti, Raposo, Santoro, Faulkner,
  et~al.]{battaglia2018relational}
Peter~W Battaglia, Jessica~B Hamrick, Victor Bapst, Alvaro Sanchez-Gonzalez,
  Vinicius Zambaldi, Mateusz Malinowski, Andrea Tacchetti, David Raposo, Adam
  Santoro, Ryan Faulkner, et~al.
\newblock Relational inductive biases, deep learning, and graph networks.
\newblock \emph{arXiv preprint arXiv:1806.01261}, 2018.

\bibitem[Weisfeiler and Leman()]{weisfeiler1968reduction}
Boris Weisfeiler and Andrei Leman.
\newblock The reduction of a graph to canonical form and the algebra which
  appears therein.

\bibitem[Corso et~al.(2020)Corso, Cavalleri, Beaini, Liò, and
  Veličković]{corso2020principal}
Gabriele Corso, Luca Cavalleri, Dominique Beaini, Pietro Liò, and Petar
  Veličković.
\newblock Principal neighbourhood aggregation for graph nets, 2020.

\bibitem[Li et~al.(2018)Li, Han, and Wu]{DBLP:journals/corr/abs-1801-07606}
Qimai Li, Zhichao Han, and Xiao{-}Ming Wu.
\newblock Deeper insights into graph convolutional networks for semi-supervised
  learning.
\newblock \emph{CoRR}, abs/1801.07606, 2018.
\newblock URL \url{http://arxiv.org/abs/1801.07606}.

\bibitem[Oono and Suzuki(2019)]{DBLP:journals/corr/abs-1905-10947}
Kenta Oono and Taiji Suzuki.
\newblock On asymptotic behaviors of graph cnns from dynamical systems
  perspective.
\newblock \emph{CoRR}, abs/1905.10947, 2019.
\newblock URL \url{http://arxiv.org/abs/1905.10947}.

\bibitem[Srivastava et~al.(2014)Srivastava, Hinton, Krizhevsky, Sutskever, and
  Salakhutdinov]{JMLR:v15:srivastava14a}
Nitish Srivastava, Geoffrey Hinton, Alex Krizhevsky, Ilya Sutskever, and Ruslan
  Salakhutdinov.
\newblock Dropout: A simple way to prevent neural networks from overfitting.
\newblock \emph{Journal of Machine Learning Research}, 15\penalty0
  (56):\penalty0 1929--1958, 2014.
\newblock URL \url{http://jmlr.org/papers/v15/srivastava14a.html}.

\bibitem[Gal and Ghahramani(2016)]{gal2016dropout}
Yarin Gal and Zoubin Ghahramani.
\newblock Dropout as a bayesian approximation: Representing model uncertainty
  in deep learning, 2016.

\bibitem[Rong et~al.(2019)Rong, Huang, Xu, and
  Huang]{DBLP:journals/corr/abs-1907-10903}
Yu~Rong, Wenbing Huang, Tingyang Xu, and Junzhou Huang.
\newblock The truly deep graph convolutional networks for node classification.
\newblock \emph{CoRR}, abs/1907.10903, 2019.
\newblock URL \url{http://arxiv.org/abs/1907.10903}.

\bibitem[Wu et~al.(2019)Wu, Pan, Chen, Long, Zhang, and
  Yu]{DBLP:journals/corr/abs-1901-00596}
Zonghan Wu, Shirui Pan, Fengwen Chen, Guodong Long, Chengqi Zhang, and
  Philip~S. Yu.
\newblock A comprehensive survey on graph neural networks.
\newblock \emph{CoRR}, abs/1901.00596, 2019.
\newblock URL \url{http://arxiv.org/abs/1901.00596}.

\bibitem[Veličković et~al.(2018)Veličković, Cucurull, Casanova, Romero,
  Liò, and Bengio]{velickovic2018graph}
Petar Veličković, Guillem Cucurull, Arantxa Casanova, Adriana Romero, Pietro
  Liò, and Yoshua Bengio.
\newblock Graph attention networks, 2018.

\bibitem[Murphy et~al.(2018)Murphy, Srinivasan, Rao, and
  Ribeiro]{DBLP:journals/corr/abs-1811-01900}
Ryan~L. Murphy, Balasubramaniam Srinivasan, Vinayak~A. Rao, and Bruno Ribeiro.
\newblock Janossy pooling: Learning deep permutation-invariant functions for
  variable-size inputs.
\newblock \emph{CoRR}, abs/1811.01900, 2018.
\newblock URL \url{http://arxiv.org/abs/1811.01900}.

\bibitem[Neal(2012)]{neal2012bayesian}
Radford~M Neal.
\newblock \emph{Bayesian learning for neural networks}, volume 118.
\newblock Springer Science \& Business Media, 2012.

\bibitem[MacKay(1992)]{mackay1992practical}
David~JC MacKay.
\newblock A practical bayesian framework for backpropagation networks.
\newblock \emph{Neural computation}, 4\penalty0 (3):\penalty0 448--472, 1992.

\bibitem[Blei et~al.(2017)Blei, Kucukelbir, and McAuliffe]{Blei_2017}
David~M. Blei, Alp Kucukelbir, and Jon~D. McAuliffe.
\newblock Variational inference: A review for statisticians.
\newblock \emph{Journal of the American Statistical Association}, 112\penalty0
  (518):\penalty0 859–877, Apr 2017.
\newblock ISSN 1537-274X.
\newblock \doi{10.1080/01621459.2017.1285773}.
\newblock URL \url{http://dx.doi.org/10.1080/01621459.2017.1285773}.

\bibitem[Blundell et~al.(2015)Blundell, Cornebise, Kavukcuoglu, and
  Wierstra]{blundell2015weight}
Charles Blundell, Julien Cornebise, Koray Kavukcuoglu, and Daan Wierstra.
\newblock Weight uncertainty in neural networks, 2015.

\bibitem[Kingma et~al.(2015)Kingma, Salimans, and
  Welling]{kingma2015variational}
Diederik~P. Kingma, Tim Salimans, and Max Welling.
\newblock Variational dropout and the local reparameterization trick, 2015.

\bibitem[Zhang et~al.(2018)Zhang, Pal, Coates, and Üstebay]{zhang2018bayesian}
Yingxue Zhang, Soumyasundar Pal, Mark Coates, and Deniz Üstebay.
\newblock Bayesian graph convolutional neural networks for semi-supervised
  classification, 2018.

\bibitem[Chen et~al.(2018)Chen, Ma, and
  Xiao]{DBLP:journals/corr/abs-1801-10247}
Jie Chen, Tengfei Ma, and Cao Xiao.
\newblock Fastgcn: Fast learning with graph convolutional networks via
  importance sampling.
\newblock \emph{CoRR}, abs/1801.10247, 2018.
\newblock URL \url{http://arxiv.org/abs/1801.10247}.

\bibitem[Hasanzadeh et~al.(2020)Hasanzadeh, Hajiramezanali, Boluki, Zhou,
  Duffield, Narayanan, and Qian]{hasanzadeh2020bayesian}
Arman Hasanzadeh, Ehsan Hajiramezanali, Shahin Boluki, Mingyuan Zhou, Nick
  Duffield, Krishna Narayanan, and Xiaoning Qian.
\newblock Bayesian graph neural networks with adaptive connection sampling,
  2020.

\bibitem[Cai and Wang(2020)]{cai2020note}
Chen Cai and Yusu Wang.
\newblock A note on over-smoothing for graph neural networks, 2020.

\bibitem[Kingma and Ba(2017)]{kingma2017adam}
Diederik~P. Kingma and Jimmy Ba.
\newblock Adam: A method for stochastic optimization, 2017.

\bibitem[Delaney(2004)]{delaney2004esol}
John~S Delaney.
\newblock Esol: estimating aqueous solubility directly from molecular
  structure.
\newblock \emph{Journal of chemical information and computer sciences},
  44\penalty0 (3):\penalty0 1000--1005, 2004.

\bibitem[Mobley and Guthrie(2014)]{mobley2014freesolv}
David~L Mobley and J~Peter Guthrie.
\newblock Freesolv: a database of experimental and calculated hydration free
  energies, with input files.
\newblock \emph{Journal of computer-aided molecular design}, 28\penalty0
  (7):\penalty0 711--720, 2014.

\bibitem[Wang et~al.(2020)Wang, Zheng, Ye, Gan, Li, Song, Zhou, Ma, Yu, Gai,
  Xiao, He, Karypis, Li, and Zhang]{wang2020deep}
Minjie Wang, Da~Zheng, Zihao Ye, Quan Gan, Mufei Li, Xiang Song, Jinjing Zhou,
  Chao Ma, Lingfan Yu, Yu~Gai, Tianjun Xiao, Tong He, George Karypis, Jinyang
  Li, and Zheng Zhang.
\newblock Deep graph library: A graph-centric, highly-performant package for
  graph neural networks, 2020.

\bibitem[Paszke et~al.(2019)Paszke, Gross, Massa, Lerer, Bradbury, Chanan,
  Killeen, Lin, Gimelshein, Antiga, Desmaison, Kopf, Yang, DeVito, Raison,
  Tejani, Chilamkurthy, Steiner, Fang, Bai, and Chintala]{NEURIPS2019_9015}
Adam Paszke, Sam Gross, Francisco Massa, Adam Lerer, James Bradbury, Gregory
  Chanan, Trevor Killeen, Zeming Lin, Natalia Gimelshein, Luca Antiga, Alban
  Desmaison, Andreas Kopf, Edward Yang, Zachary DeVito, Martin Raison, Alykhan
  Tejani, Sasank Chilamkurthy, Benoit Steiner, Lu~Fang, Junjie Bai, and Soumith
  Chintala.
\newblock Pytorch: An imperative style, high-performance deep learning library.
\newblock In H.~Wallach, H.~Larochelle, A.~Beygelzimer, F.~d\textquotesingle
  Alch\'{e}-Buc, E.~Fox, and R.~Garnett, editors, \emph{Advances in Neural
  Information Processing Systems 32}, pages 8024--8035. Curran Associates,
  Inc., 2019.

\end{thebibliography}

\newpage
\onecolumn

\icmltitle{Stochastic Aggregation in Graph Neural Networks: Supplementary Material}




\begin{icmlauthorlist}
\icmlauthor{Yuanqing Wang}{mskcc,cornell,cuny}
\icmlauthor{Theofanis Karaletsos}{facebook}
\end{icmlauthorlist}

\icmlaffiliation{mskcc}{Computational and Systems Biology Program, Sloan Kettering Institute, Memorial Sloan Kettering Cancer Center, New York, NY 10065}

\icmlaffiliation{cornell}{Physiology, Biophysics, and System Biology Ph.D. Program, New York, NY 10065}

\icmlaffiliation{cuny}{M.F.A. Program in Creative Writing, City College of New York, City University of New York, New York, NY 10031}

\icmlaffiliation{facebook}{Facebook, Inc., Menlo Park, CA 94025}

\icmlcorrespondingauthor{Yuanqing Wang}{yuanqing.wang@choderalab.org}

\icmlkeywords{Graph Neural Network, Variational Inference}

\vskip 0.3in



\section{Theorems and Proofs}
\label{supsec-theorems}
\begin{theorem}
\label{one}
Only one aggregator $\rho$ is needed to discriminate between multisets $X$ with support $\mathbb{R}^C \setminus \{ \mathbf{0} \}$ after perturbation with some noise distribution $q$ on $\mathbb{R}^C$.
More formally, under some distribution $q$, $\rho(\xi_q(\mathbf{X}))$ and $\rho(\xi_q(\mathbf{Y}))$ are equal in distribution iff. there exist a permutation $P$ s.t. $[P\mathbf{X}]_i = [\mathbf{Y}_q]_i, \forall 1 \leq i \leq \mid \mathbf{X} \mid$.
\end{theorem}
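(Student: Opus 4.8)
The plan is to prove the two directions of the ``iff'' separately, with the reverse (sufficiency) direction being essentially immediate and the forward (necessity) direction carrying all the content. For the reverse direction, suppose $\mathbf{Y} = P\mathbf{X}$ for some permutation $P$, so that $\mathbf{X}$ and $\mathbf{Y}$ coincide as multisets. Because the perturbing variables $z_i \sim q$ are i.i.d.\ and $\rho$ is permutation invariant (Equation~\ref{invariance}), relabelling the noises leaves the joint law of $\xi_q(\cdot)$ unchanged, so $\rho(\xi_q(\mathbf{X}))$ and $\rho(\xi_q(\mathbf{Y}))$ are trivially equal in distribution. The real work lies in showing that equality in distribution of the aggregated, perturbed multisets forces the underlying multisets to agree.

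For the forward direction I would fix $\rho = \operatorname{SUM}$ (the theorem only asserts that \emph{one} aggregator suffices) and pass to characteristic functions. Writing $\varphi_q$ for the characteristic function of the scalar noise and treating first the case $C = 1$, independence of the $z_i$ gives
\[
\varphi_{\operatorname{SUM}(\xi_q(\mathbf{X}))}(t) = \prod_{i} \varphi_q(t\, x_i),
\]
so the hypothesis becomes $\prod_i \varphi_q(t x_i) = \prod_j \varphi_q(t y_j)$ for all $t$. The crux is to choose $q$ so that this product identity can be ``factored back'' into equality of the multisets $\{x_i\}$ and $\{y_j\}$. I would take $q$ with a meromorphic transform and infinitely many nonvanishing cumulants --- concretely an $\operatorname{Exponential}$ law, whose moment generating function $M_q(s) = (1-s)^{-1}$ makes $\prod_i M_q(t x_i) = \prod_i (1 - t x_i)^{-1}$ a rational function of $t$ whose poles, counted with multiplicity, are exactly $\{1/x_i\}$. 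Equality of the two rational functions then matches pole sets, giving $\{1/x_i\} = \{1/y_j\}$ and hence the desired multiset equality; here the exclusion of $\mathbf{0}$ is precisely what guarantees every $1/x_i$ is a genuine finite pole. An equivalent route is to expand $\log \varphi_q$ about $t = 0$, read off $\sum_i x_i^k = \sum_j y_j^k$ for every $k$ with nonzero $k$-th cumulant, and invoke Newton's identities to recover the multiset from its power sums.

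To reach general $C$ I would let the noise act as a common scalar per element (or, retaining the Hadamard form, couple the coordinates), so that the joint characteristic function factors as $\prod_i \varphi_q(\langle t, x_i\rangle)$ over $t \in \mathbb{R}^C$; the same meromorphic factorization then produces irreducible linear factors $1 - \langle t, x_i\rangle$, and unique factorization in $\mathbb{R}[t_1,\dots,t_C]$ matches them to conclude $\{x_i\} = \{y_j\}$. The main obstacle, and the step deserving the most care, is the choice of $q$: the argument genuinely fails for poorly chosen noise. For instance, uniform noise on $(0,1)$ leaves $\{3,-1,-2\}$ and $\{-3,1,2\}$ equidistributed after summing --- their sums are negatives of one another and the noise is symmetric about its mean --- while Gaussian noise produces a Gaussian sum determined by only its first two cumulants and so cannot separate multisets of size exceeding two. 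The proof must therefore isolate the two properties of $q$ actually used, namely asymmetry and infinitely many nonvanishing cumulants, and verify that the chosen law possesses both; the remaining manipulations --- factoring the transforms, matching poles or coefficients, and applying Newton's identities --- are then routine.
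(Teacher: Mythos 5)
Your proposal is correct and follows the same overall route as the paper's proof---fix $\rho=\operatorname{SUM}$, write the transform of the randomly weighted sum as the product $\prod_i M_q(x_i t)$, and argue that equality of these products forces equality of the multisets---but the one place where you deviate, the choice of $q$, is exactly the place where the paper's argument breaks and yours does not. The paper takes $q=\operatorname{Uniform}(0,1)$, derives $\prod_i\frac{e^{x_i t}-1}{x_i t}=\prod_j\frac{e^{y_j t}-1}{y_j t}$, and asserts via ``the Taylor expansion of $\exp$'' that $\sum_i x_i^n=\sum_j y_j^n$ for \emph{all} $n$. That inference is invalid: equating the products only equates the cumulants of the two weighted sums, and since $\operatorname{Uniform}(0,1)$ is symmetric about its mean its odd cumulants of order $\ge 3$ vanish, so only the power sums for $n=1$ and even $n$ are pinned down. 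Your counterexample $\{3,-1,-2\}$ versus $\{-3,1,2\}$ (apply $z_i\mapsto 1-z_i$ to see the two weighted sums are identically distributed) therefore defeats the paper's own proof on $\mathbb{R}\setminus\{0\}$; the paper's choice is only safe if the features are first mapped into $\mathbb{R}^{+}$, as its Table 1 caption hints. Your exponential noise, with $M_q(s)=(1-s)^{-1}$, is a genuine repair: all cumulants $(n-1)!$ are nonzero, or equivalently $\prod_i(1-x_i t)^{-1}$ is a rational function whose poles with multiplicity are exactly $\{1/x_i\}$, so the multiset (including its cardinality) is recovered, and the exclusion of $\mathbf{0}$ enters precisely where you say. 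Your handling of $C>1$ is likewise more careful than the paper's unproved one-line claim that the extension is easy: fully independent coordinatewise noise factors the joint transform coordinate by coordinate and cannot see the coupling between features (e.g.\ $\{(1,2),(2,1)\}$ vs.\ $\{(1,1),(2,2)\}$ become indistinguishable), whereas your shared scalar noise yields $\prod_i(1-\langle t,x_i\rangle)^{-1}$ and unique factorization of the linear forms finishes the argument. In short: same strategy, but your choice of noise is a necessary correction to, not merely a variant of, the paper's proof.
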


We prove Theorem~\ref{one} for $\operatorname{SUM}$ aggregator on $\mathbb{R} \setminus \{0\}$, although is easy to expand to $\operatorname{MEAN}$ and $\operatorname{MAX}$ aggregators on $\mathbb{R}^C \setminus \{ \mathbf{0} \}.$
\begin{proof}[Proof]
Suppose we have two multisets $\mathbf{X} = \{x_i, i = 1, 2, ..., N_X\}$ and $\mathbf{Y} = \{y_j, i = 1, 2, ..., N_Y\}$.
We choose the multiplicative noise $Z \sim q = \operatorname{Uniform}(0, 1)$, with the moment generating function $M_Z(t) = \frac{e^{t}-1}{t}$, and $\operatorname{SUM}(\xi_q(X))$ and $\operatorname{SUM}(\xi_q(Y))$ are equal in distribution.
Thus, the moment generating function of $\operatorname{SUM}(\xi_q(X))$ is
\begin{equation}
M_{\operatorname{SUM}(\xi_q(\mathbf{X}))}(t) 
= M_{\sum x_i z_i}(t) 
= \frac{\prod (e^{x_i t} - 1)}{\prod x_i t}
\end{equation}
Since $\operatorname{SUM}(\xi_q(X))$ and $\operatorname{SUM}(\xi_q(Y))$ are equal in distribution, $M_{\operatorname{SUM}(\xi_q(X))}(t) = M_{\operatorname{SUM}(\xi_q(Y))}(t)$, and therefore $\frac{\prod (e^{x_i t} - 1)}{\prod x_i t} = \frac{\prod (e^{y_j t} - 1)}{\prod y_j t}$.
Considering the Taylor expansion of $\exp(\cdot)$, we have
\begin{equation}
\sum x_i^n = \sum y_j^n
\end{equation}
for any $n \in \mathbb{N^{+}}$.
Since $\forall x_i \neq 0$ and $\forall y_i \neq 0 $, we conclude that $\mathbf{X}$ and $\mathbf{Y}$ are equal.
\end{proof}

\begin{theorem}
\label{dirichlet-slow}
For any multiplicative noise distribution $q$ satisfying $\mid \mathbb{E}_{z \sim q}(z) \mid \geq 1$, any deterministic aggregator $\rho$, a node representation $\mathbf{X}$ of a graph, we have:
\begin{equation}
\mathbb{E}_{q}(\mathcal{E}(\rho(\xi_q(\mathbf{X})))) \geq \mathcal{E}(\rho (\mathbf{X}))
\end{equation}
\end{theorem}

\begin{proof}
We use $\mathcal{N}(\cdot)$ to denote the neighbor-finding operation.
By Jensen's inequality, we have:
\begin{align}
\begin{split}
&\mathbb{E}_{q}\big(\mathcal{E}(\rho(\xi_q(\mathbf{X})))\big) \\
&= \mathbb{E}_q\big(\frac{1}{2} \sum \mathbf{A}_{ij}(\frac{\rho(\mathcal{N}(v_i))}{\sqrt{1+d_i}} - \frac{\rho(\mathcal{N}(v_j))}{\sqrt{1+d_j}})^2 \big)\\
&\geq \frac{1}{2} \sum \mathbf{A}_{ij} \big(\frac{ \mathbb{E}_q(\rho(\mathcal{N}(v_i)))}{\sqrt{1+d_i}} - \frac{ \mathbb{E}_q(\rho(\mathcal{N}(v_j)))}{\sqrt{1+d_j}} \big)^2\\
&= \frac{1}{2} \sum \mathbf{A}_{ij}\big(\frac{\mathbb{E}_q(\rho(z_k u_k, u_k \in \mathcal{N}(v_i), z_k \sim q(z)))}{\sqrt{1+d_i}} - \frac{ \mathbb{E}_q(\rho( z_l u_l, u_l \in \mathcal{N}(v_j), z_l \sim q(z)))}{\sqrt{1+d_j}}\big)^2\\
&= \frac{1}{2} \sum \mathbf{A}_{ij}\mathbb{E}_q^2(z)\big(\frac{\rho(\mathcal{N}(v_i))}{\sqrt{1+d_i}} - \frac{ \rho(\mathcal{N}(v_j))}{\sqrt{1+d_j}} \big)^2\\
&\geq \frac{1}{2} \sum \mathbf{A}_{ij}\big(\frac{\rho(\mathcal{N}(v_i))}{\sqrt{1+d_i}} - \frac{ \rho(\mathcal{N}(v_j))}{\sqrt{1+d_j}}\big)^2 = \mathcal{E}(\rho (\mathbf{X}))
\end{split}
\end{align}
\end{proof}

\section{Extra Results}
\subsection{One sample across message-passing rounds vs. re-sampling.}
In Table~\ref{one-sample} we briefly study the performance of STAG between when we sample the noise distribution once during the forward pass rather than once per message-passing step.
Compared to Table 2, we observe an increase in the performance on Citeseer and a decrease on Cora.

\begin{table}[htbp]
    \centering
    \begin{tabular}{c c c c c}
        \hline
        & \multicolumn{2}{c}{Cora}
        & \multicolumn{2}{c}{Citeseer}\\
        \hline
        & 2 layers
        & 4 layers
        & 2 layers
        & 4 layers\\
        \hline
        $\operatorname{Normal}(1, 0.2)$
        & 80.04 ± 0.08 & 77.26 ± 0.75
        & 67.46 ± 0.70 & 60.52 ± 2.90
        \\
        
        $\operatorname{Normal}(1, 0.4)$
        & 80.14 ± 0.29 & 77.26 ± 0.75
        & 67.74 ± 0.46 & 61.52 ± 1.38
        \\
        
        $\operatorname{Normal}(1, 0.8)$
        & 80.68 ± 0.50 & 77.72 ± 1.20
        & 67.70 ± 1.23 & 61.58 ± 1.03
        \\
        \hline
        
        $\operatorname{Uniform}(0.8, 1.2)$
        & 79.64 ± 0.21 & 77.42 ± 1.62
        & 67.68 ± 0.41 & 61.10 ± 1.60
        \\
        
        $\operatorname{Uniform}(0.6, 1.4)$
        & 79.70 ± 0.29 & 77.00 ± 1.55
        & 67.80 ± 0.82 & 62.04 ± 1.00
        \\ 
        
        $\operatorname{Unifrom}(0.2, 1.8)$
        & 79.74 ± 0.48 & 77.50 ± 1.13
        & 67.26 ± 0.75 & 62.80 ± 1.60
        \\
        \hline
        
        $\operatorname{Bernoulli}(0.2)$
        & 79.90 ± 0.35 & 79.90 ± 0.35
        & 67.88 ± 0.78 & 62.16 ± 0.81
        \\
        
        $\operatorname{Bernoulli}(0.4)$
        & 80.56 ± 0.71 & 77.00 ± 0.78
        & 68.26 ± 1.25 & 61.28 ± 1.79
        \\
        
        $\operatorname{Bernoulli}(0.8)$
        & 15.70 ± 0.38 & 53.90 ± 0.87
        & 17.84 ± 0.52 & 19.00 ± 0.92
        \\
        \hline
        
    \end{tabular}
    \caption{Performance of STAG on citation graphs with same samples across rounds of message-passing.}
    \label{one-sample}
\end{table}

\subsection{Compatibility with other variants of GNN}
\label{supsec-var}
To illustrate that STAG is compatible with various types of GNNs, we test the performance of the non-adaptive version of STAG with GraphSAGE~\cite{hamilton2017inductive} and Graph Isomorphism Network (GIN)~\cite{xu2018powerful}.
The experiment setting in this section is identical to Table 2.
For GraphSAGE, we chose the $\operatorname{MEAN}$ function as the basic aggregator.
For GIN, the update function was chosen to be a single-layer neural network with ReLU activation; the basic aggregator was chosen to be $\operatorname{SUM}$.

As shown in Table~\ref{gin} and Table~\ref{sage}, STAG in general boosts the performance on both citation datasets.
STAG with continuous noise provides further performance improvement when used with GraphSAGE whereas Bernoulli noise enhances the test set accuracy further when used with GIN.

\begin{table}[htbp]
    \centering
    \begin{tabular}{c c c c c}
        \hline
        & \multicolumn{2}{c}{Cora}
        & \multicolumn{2}{c}{Citeseer}\\
        \hline
        & 2 layers
        & 4 layers
        & 2 layers
        & 4 layers\\
        \hline
        Deterministic
        & 79.20 ± 0.15 & 78.64 ± 1.63
        & 70.42 ± 0.27 & 63.66 ± 3.39
        \\
        \hline 
        
        $\operatorname{Normal}(1, 0.2)$
        & 79.46 ± 0.39 & 79.14 ± 0.83
        & 70.20 ± 0.35 & 66.10 ± 0.90
        \\
        
        $\operatorname{Normal}(1, 0.4)$
        & 79.34 ± 0.33 & 78.20 ± 1.19
        & 70.82 ± 0.50 & 63.38 ± 3.48
        \\
        
        $\operatorname{Normal}(1, 0.8)$
        & 79.26 ± 0.35 & 77.66 ± 1.07
        & 70.14 ± 0.61 & 65.68 ± 1.27
        \\
        \hline
        
        $\operatorname{Uniform}(0.8, 1.2)$
        & 79.24 ± 0.23 & 78.80 ± 1.30
        & 70.82 ± 0.60 & 65.54 ± 3.80
        \\
        
        $\operatorname{Uniform}(0.6, 1.4)$
        & 79.54 ± 0.15 & 79.06 ± 1.20
        & 69.78 ± 0.80 & 64.42 ± 2.79
        \\
        
        $\operatorname{Uniform}(0.2, 1.8)$
        & 79.64 ± 0.30 & 78.26 ± 1.40
        & 70.14 ± 0.58 & 65.34 ± 2.97
        \\
        \hline
        
        $\operatorname{Bernoulli}(0.2)$
        & 78.42 ± 0.20 & 77.90 ± 1.91
        & 70.22 ± 0.52 & 64.72 ± 1.04
        \\
        
        $\operatorname{Bernoulli}(0.4)$
        & 76.38 ± 0.52 & 76.94 ± 1.31
        & 70.02 ± 0.83 & 66.82 ± 1.26
        \\
        
        $\operatorname{Bernoulli}(0.8)$
        & 66.02 ± 0.32 & 65.86 ± 0.79
        & 62.00 ± 0.69 & 61.08 ± 1.78
        \\
        \hline

    \end{tabular}
    \caption{Performance of STAG with GraphSAGE~\cite{hamilton2017inductive}}
    \label{sage}
\end{table}

\begin{table}[htbp]
    \centering
    \begin{tabular}{c c c c c}
        \hline
        & \multicolumn{2}{c}{Cora}
        & \multicolumn{2}{c}{Citeseer}\\
        \hline
        & 2 layers
        & 4 layers
        & 2 layers
        & 4 layers\\
        \hline
        Deterministic
        & 74.78 ± 0.52 & 72.54 ± 0.28
        & 65.78 ± 0.45 & 61.60 ± 1.69
        \\
        \hline
        
        $\operatorname{Normal}(1, 0.2)$
        & 75.12 ± 1.01 & 73.52 ± 0.70
        & 66.94 ± 0.66 & 61.66 ± 0.93
        \\
        
        $\operatorname{Normal}(1, 0.4)$
        & 75.68 ± 0.49 & 73.32 ± 0.81
        & 66.74 ± 0.96 & 63.26 ± 0.58
        \\
        
        $\operatorname{Normal}(1, 0.8)$
        & 75.50 ± 0.84 & 74.48 ± 1.11
        & 66.90 ± 1.02 & 64.54 ± 1.35
        \\
        \hline
        
        $\operatorname{Uniform}(0.8, 1.2)$
        & 75.42 ± 0.97 & 73.18 ± 1.44
        & 66.30 ± 0.89 & 61.14 ± 1.20
        \\
        
        $\operatorname{Uniform}(0.6, 1.4)$
        & 75.66 ± 0.98 & 73.20 ± 1.62
        & 66.76 ± 0.90 & 62.86 ± 1.08
        \\
        
        $\operatorname{Uniform}(0.2, 1.8)$
        & 75.68 ± 1.13 & 74.76 ± 1.27
        & 66.76 ± 0.68 & 64.42 ± 0.79
        \\
        \hline
        
        $\operatorname{Bernoulli}(0.2)$
        & 76.30 ± 0.60 & 73.74 ± 0.70
        & 66.72 ± 0.84 & 64.08 ± 1.97
        \\
        
        $\operatorname{Bernoulli}(0.4)$
        & 76.82 ± 0.98 & 74.50 ± 1.28
        & 68.48 ± 0.68 & 64.20 ± 1.33 
        \\
        
        $\operatorname{Bernoulli}(0.8)$
        & 77.96 ± 0.76 & 77.00 ± 1.15
        & 69.06 ± 0.69 & 62.66 ± 1.44 
        \\
        \hline

    \end{tabular}
    \caption{Performance of STAG with GIN~\cite{xu2018powerful}}
    \label{gin}
\end{table}

\subsection{Stochastic Aggregation vs. Stochastic Weights}
\label{subsec-bbb}
\begin{table}[htbp]
    \centering
    \begin{tabular}{c c c  | c c }
        \hline
        & Cora & Citeseer & \# Params & Iter. Time\\
        \hline
        BBB & 79.28 ± 1.17 & 65.12 ± 1.93 & 368k & 48.6 ms\\
        \hline
    $\text{STAG}_\text{VI} (\mathbb{R}^{C})$ & 81.33 ± 0.62 & 68.53 ± 0.54 & 188k & 15.5 ms \\
   $\text{STAG}_\text{VI} (\mathbb{R}^{\mid \mathcal{E} \mid \times C})$ & 81.38 ± 0.40 & 71.28 ± 0.65 & 1186k & 30.0 ms\\
    $\text{STAG}_\text{MLE}$ (best) & 80.34 ± 0.45 & 69.22 ± 0.87 & 184k & 9.3 ms\\
    \hline
    \end{tabular}
    \caption{Performance of Variational Inference on Weight Space (BBB)~\cite{blundell2015weight}}
    \label{bbb}
\end{table}

We compare our $\operatorname{STAG}_\text{VI}$ framework with Bayes-by-Backprop (BBB)~\cite{blundell2015weight} which performs variational inference over the weight posterior of the GNN using a factorized Normal distribution over weight space, corresponding to a \textit{mean-field} assumption.
\begin{equation}
Q(\mathbf{W}) = \prod Q(\mathbf{W}[i, :])
\end{equation}
with $Q(\mathbf{W}) = \mathcal{N}(\mu_\mathbf{W}, \sigma_\mathbf{W})$.

We use a similar experimental setting (two layer GCN, 128 units each, ReLU activation function, Adam optimizer with $10^{-3}$ learning rate) except that we infer weight posteriors and report the VI performance as well as efficiency in Table~\ref{bbb}.
We observe that STAG has a higher performance than BBB on test set.
It is also faster to train and has a better parameter efficiency.

\section{Experiment Details}
\label{supsec-detail}
\subsection{Implementation Details}
The models and the scripts necessary for all the experiments are implemented in Python 3.6 with Deep Graph Library (DGL)~\cite{wang2020deep} and PyTorch~\cite{NEURIPS2019_9015}.
We release the code with MIT open-source license here: \url{https://github.com/yuanqing-wang/stag.git}.

\subsection{Computational Infrastructure}
The experiments are carried out on a single NVIDIA Tesla V100\textsuperscript{\tiny\textregistered} GPU with 32 GB memory.
The speed benchmark experiments are done with two-layer GCN on Cora dataset.

\subsection{Datasets}
The citation datasets, Cora and Citeseer, contain one graph each.
We split the citation datasets in the same process as \citet{DBLP:journals/corr/KipfW16}---140 training nodes, 500 validation nodes, and 1000 test nodes for Cora and 120 training nodes, 500 validation nodes, and 1000 test nodes for Citeseer.

When it comes to molecule datasets, ESOL~\cite{delaney2004esol} is a dataset containing water solubility data (log mol per liter) of 1128 organic small molecule.
FreeSolv~\cite{mobley2014freesolv} provides experimental hydration free energy (kcal/mol) for 642 small molecules in water.
We used the atom featurization provided by DGLLife~\cite{wang2020deep} and randomly (with fixed random seed) split the both molecule datasets into training/validation/test sets (80:10:10).

\subsection{Numerical Optimization}
Poisson negative log likelihood loss function is used for citation graph node classification (Cora and Citeseer); mean squared error (MSE) loss is used for molecule graph regression (ESOL~\cite{delaney2004esol} and FreeSolv~\cite{mobley2014freesolv}).

We used Adam~\cite{kingma2017adam} optimizer for all experiments.
For citation graph benchmark results in Table 2 for STAG, DropEdge (DE)~\cite{DBLP:journals/corr/abs-1907-10903}and Graph DropConnect (GDC)~\cite{hasanzadeh2020bayesian}, we followed the experimental setting from \citet{hasanzadeh2020bayesian} and used a learning rate of $5 * 10^{-4}$ with a L2 regularization factor of $5 * 10^{-4}$ for the first layer.
For molecule graph benchmark results, we used a learning rate of $10^{-3}$ without regularization.
For all variational inference benchmark experiments and the toy example for multiset expressiveness (Figure 3), we used a learning rate of $10^{-3}$.
For performance deterioration for deep GNN toy experiment (Figure 5), we followed the experiment setting from \citet{DBLP:journals/corr/KipfW16} and used a learning rate of $10^{-2}$ and L2 regularization with factor $5 * 10^{-4}$.

We used a $\operatorname{SUM}$ function followed by a two-layer neural network with 128 units each layer and ReLU activation function to pool the node representation into graph representation in molecular graph regression tasks.

In both benchmark experiments, we trained the model for 2000 epochs with early stopping using the validation set.
For the performance deterioration with depth experiment (Figure 5), we trained the model for 400 epochs without early-stopping.

\subsection{Hyperparameters}
We used Graph Convolutional Network (GCN)~\cite{DBLP:journals/corr/KipfW16} as the GNN layers throughout the work.
Except in Figure 5 where we followed the setting in  \citet{DBLP:journals/corr/KipfW16} and used 16 units for that layer, we used 128 units everywhere.
ReLU activation functions are used everywhere.

For variational inference benchmark, since two-layer model always outperform four-layer models in a non-adaptive setting, we employed two-layer models only. 
Using the validation set, we tuned the initial values of the variational posterior $\mu_0, \sigma_0$ parameters as well as the standard deviation of the prior on the edge weights $\sigma_\text{prior}$;
the mean of the prior is fixed to be $1.0$.
We report the hyperparameters used to produce the VI results in Table~\ref{tab:hyperparameters}.
For transductive models, the initial values tuned in Table Table~\ref{tab:hyperparameters} are used as initial bias of the last feed-forward layer to predict $\mu$ and $\sigma$;
the weight of this last layer is initialized from $\mathcal{N}(0, 0.01)$ for layers predicting $\mu$ and $\mathcal{N}(0, 0.001)$ for layers predicting $\sigma$;
the rest of the initialization are set to default (Kaiming uniform).
We adopted the hyperparameters for the Beta-Bernoulli Graph DropConnect (BBGDC) model from its original publication.~\cite{hasanzadeh2020bayesian}.
In Figure~\ref{depth}, we used a dropout rate of 0.5 as is used in \citet{DBLP:journals/corr/KipfW16} and the basic noise distribution for STAG is $\mathcal{N}(1.0, 1.0)$. 
In Section~\ref{bbb}, we tuned the initialization values and the priors on the weight distributions.

During training, only one sample is acquired to estimate the gradient;
during inference, 32 samples are used everywhere to form the prediction.

\begin{table}[h]
    \centering
    \begin{tabular}{c c c c | c c c |  c c c | c c c }
    \hline
    & \multicolumn{3}{c}{Cora}
    & \multicolumn{3}{c}{Citeseer}
    & \multicolumn{3}{c}{ESOL}
    & \multicolumn{3}{c}{FreeSolv}
    \\
    \hline
    
    & $ \mu_0$ & $\log \sigma_0$ & $\sigma_\text{prior}$
     & $ \mu_0$ & $\log \sigma_0$ & $\sigma_\text{prior}$
      & $ \mu_0$ & $\log \sigma_0$ & $\sigma_\text{prior}$
       & $ \mu_0$ & $\log \sigma_0$ & $\sigma_\text{prior}$
    \\
    \hline
    
    $\text{STAG}_\text{VI} (\mathbb{R})$
    & 0.5 & 1.0 & 0.2
    & 0.5 & 0.0 & 0.5
    & 0.5 & -1.0 & 0.1
    & 0.1 & -1.0 & 1.0
    \\
    
    $\text{STAG}_\text{VI} (\mathbb{R}^C) $
    & 0.25 & 2.0 & 1.0
    & 0.25 & 2.0 & 0.5
    & 1.0 & 0.0 & 0.5
    & 0.1 & 0.0 & 0.5
    \\
    
    $\text{STAG}_\text{VI} (\mathbb{R}^{\mid \mathcal{E} \mid}) $
    & 0.5 & 1.5 & 0.5
    & 0.5 & 1.5 & 0.5
    & 0.1 & 0.0 & 1.0
    & 0.5 & -2.0 & 1.0
    \\
    
    $\text{STAG}_\text{VI} (\mathbb{R}^{\mid \mathcal{E} \mid \times C})$
    & 0.5 & 1.0 & 0.5
    & 0.5 & 1.0 & 1.0
    & 0.1 & -1.0 & 0.1
    & 1.0 & 0.0 & 0.1
    \\
    \hline
    \end{tabular}
    \caption{Hyperparameters used in VI experiments.}
    \label{tab:hyperparameters}
\end{table}
\end{document}